\definecolor{latentblue}{HTML}{2196f3} 
\definecolor{targetgreen}{HTML}{4caf50} 
\definecolor{guidingarrow}{HTML}{f44336} 
\DeclareMathOperator*{\E}{\mathbb{E}}
\newcommand{\R}{\mathbb{R}}
\newcommand{\U}{\mathcal{U}}
\newtheorem{assumption}{Assumption}
\newtheorem{theorem}{Theorem}
\newtheorem{lemma}{Lemma}
\newtheorem{remark}{Remark} 
\newtheorem{proposition}{Proposition}
\newenvironment{propone}[1]
  {\proposition}
  {\endproposition}
\newcommand\blfootnote[1]{%
  \begingroup
  \renewcommand\thefootnote{}\footnote{#1}%
  \addtocounter{footnote}{-1}%
  \endgroup
}
\title{Aligning Latent Spaces with Flow Priors}
\author{
Yizhuo Li\textsuperscript{1,2}, Yuying Ge\textsuperscript{2,\Letter}, Yixiao Ge\textsuperscript{2}, Ying Shan\textsuperscript{2}, Ping Luo\textsuperscript{1,\Letter} \\
\textsuperscript{1}The University of Hong Kong, \textsuperscript{2}ARC Lab, Tencent PCG \\
}
\begin{document}

\maketitle
\blfootnote{This paper is partially supported by the National Key R\&D Program of China No.2022ZD0161000.}
\begin{center}
\vspace{-2em}
{\small \textbf{Project Page:} \url{https://liyizhuo.com/align/}}
\end{center}

\maketitle

\begin{abstract}
    This paper presents a novel framework for aligning learnable latent spaces to arbitrary target distributions by leveraging flow-based generative models as priors. 
    Our method first pretrains a flow model on the target features to capture the underlying distribution. 
    This fixed flow model subsequently regularizes the latent space via an alignment loss, which reformulates the flow matching objective to treat the latents as optimization targets.
    We formally prove that minimizing this alignment loss establishes a computationally tractable surrogate objective for maximizing a variational lower bound on the log-likelihood of latents under the target distribution.
    Notably, the proposed method eliminates computationally expensive likelihood evaluations and avoids ODE solving during optimization.
    As a proof of concept, we demonstrate in a controlled setting that the alignment loss landscape closely approximates the negative log-likelihood of the target distribution.
    We further validate the effectiveness of our approach through large-scale image generation experiments on ImageNet with diverse target distributions, accompanied by detailed discussions and ablation studies. 
    With both theoretical and empirical validation, our framework paves a new way for latent space alignment.
\end{abstract}
  
\section{Introduction}

Latent models like autoencoders (AEs) are a cornerstone of modern machine learning~\cite{hinton2006reducing,baldi2012autoencoders,li2023comprehensive,chen2023auto,mienye2025deep}. 
These models typically map high-dimensional observations to a lower-dimensional latent space, aiming to capture salient features and dependencies~\cite{liou2014autoencoder,meng2017relational}. 
A highly desirable property of latent models is that the latent space should have structural properties, such as being close to a predefined target distribution~\cite{salah2011contractive,kingma2013auto,yao2025reconstruction,chen2024softvq}.
Such structure can incorporate domain-specific prior knowledge~\cite{Khemakhem2019VariationalAA,Raissi2019PhysicsinformedNN}, enhance the interpretability of the latent space\cite{Higgins2016betaVAELB,Chen2016InfoGANIR,Kim2018DisentanglingBF}, and facilitate latent space generation~\cite{rombach2022high,li2024autoregressive,leng2025repa,wen2025principal,yu2024representation}.
While significant progress has been made, ensuring that the learned latent representations possess such desired structure remains a crucial challenge.

Traditional approaches to enforcing distributional conformity often involve minimizing divergences like the Kullback-Leibler (KL) divergence~\cite{kingma2013auto,rombach2022high}.
However, KL can be restrictive, particularly when the target prior is only implicitly defined (e.g., by samples).
In latent generative modeling, the latent space is usually regularized with known prior distributions, such as the Gaussian distribution for Variational Autoencoders (VAE)~\cite{kingma2013auto,Esser2020TamingTF}, and the categorical distribution for Vector Quantized VAE (VQ-VAE)~\cite{van2017neural}.
Recent works~\cite{qiu2025robust,li2024imagefolder,li2024xq,chen2025masked,yao2025reconstruction,chen2024softvq} have proposed to use pre-trained feature extractors as target distribution and directly optimize the latent distances, which are shown to be effective but computationally expensive and require per-sample features.

Recent advances in flow-based generative models~\cite{lipman2022flow,liu2022flow} offer a promising avenue to capture complex target distributions.
In this work, we address the question: \textit{Can we efficiently align a learnable latent space to an arbitrary target distribution using a pre-trained flow model as a prior?} 
We answer this question affirmatively by proposing a novel framework that leverages a pre-trained flow model to define a computationally tractable alignment loss, which effectively guides the latents towards the target distribution.

\begin{figure}[tbp]
    \centering
    \includegraphics[width=\textwidth]{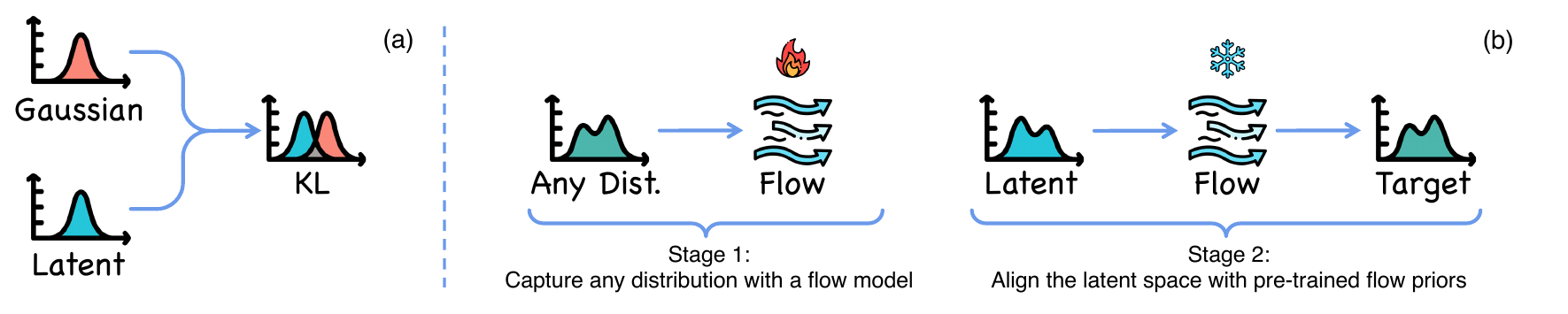}
    \caption{(a) Conventional alignment works with only known priors (e.g., Gaussian or categorical) using KL or cross-entropy losses. (b) Our proposed method can align the latent distribution to \textbf{arbitrary} target distribution captured by a pre-trained flow model.}
    \label{fig:teaser}
\end{figure}

Our proposed approach unfolds in a two-stage process as illustrated in Fig.\ref{fig:teaser}.
The first stage involves pre-training a flow-based generative model on the desired target features, allowing it to learn the mapping from a base distribution (e.g., Gaussian) to the target distribution. 
Once this flow model accurately captures the target distribution, its parameters are fixed. 
In the second stage, this flow model serves as a prior to regularize a learnable latent space, for instance, the output of the encoder in an AE. 
This regularization is achieved by minimizing an alignment loss, which ingeniously adapts the standard flow matching objective by treating the learnable latents as the target. 
This pipeline provides an efficient mechanism to guide the latent space towards the desired target structure without requiring direct comparison to target samples or expensive likelihood evaluations of the flow model.

We theoretically justify our method by connecting the alignment loss to the maximum likelihood estimation of the latents under the target distribution. 
While directly maximizing this likelihood under a flow model is often computationally prohibitive due to the need to evaluate the trace of Jacobian determinants and solve an ordinary differential equation (ODE) for each step, our alignment loss offers a more tractable alternative. 
We formally demonstrate that minimizing this loss serves as a computationally efficient proxy for maximizing a variational lower bound on the log-likelihood of the latents under the flow-defined target distribution.

Our framework offers three key advantages.
First, our approach enables alignment to \textbf{arbitrary target distributions}, even those implicitly defined by samples, overcoming the limitations of conventional methods that require explicit parametric priors.
Second, the alignment loss acts as a \textbf{direct surrogate} for the log-likelihood of latents under the target distribution, providing a theoretically grounded objective that avoids heuristic metrics like cosine similarity used in per-sample feature matching~\cite{chen2025masked,yao2025reconstruction,chen2024softvq}.
Third, our framework is \textbf{computationally lightweight}, requiring only a single forward pass through the pre-trained flow model during training, thereby bypassing the need for expensive adversarial optimization~\cite{goodfellow2020generative}, likelihood evaluations, or per-sample feature extraction~\cite{qiu2025robust,li2024imagefolder,li2024xq}.

We empirically validate the efficacy of our proposed alignment strategy through a series of experiments. We start with illustrative experiments in a controlled toy setting using a mixture of Gaussians to confirm that our alignment loss landscape indeed serves as a proxy for the log-likelihood of the latents under the target distribution. Then we demonstrate the scalability of our approach by conducting large-scale image generation experiments on ImageNet~\cite{deng2009imagenet} with diverse target distributions. Detailed discussions and ablation studies are provided to underscore the robustness and effectiveness.

We believe this method offers a powerful and flexible tool for incorporating rich distributional priors into latent models. 
Our work paves the way for more flexible and powerful structured representation learning, and we anticipate its application and extension in various domains requiring distributional structure control over latent spaces.

\section{Related Work}

\subsection{Flow-based Models}
Flow-based generative models have emerged as a powerful class of generative models~\cite{esser2024scaling,flux2024,chen2025goku,Kingma2018GlowGF,Zhai2024NormalizingFA,Zhao2024FlowTurboTR,shin2025deeply}. They were first introduced as CNFs~\cite{chen2018neural,grathwohl2018ffjord} that learn an invertible mapping between a simple base distribution (e.g., Gaussian) and a complex data distribution. 
Early works like NICE~\cite{Dinh2014NICENI} and Real NVP~\cite{Dinh2016DensityEU} introduced additive and affine coupling layers to construct invertible neural networks. A notable recent development is Flow Matching (FM)~\cite{lipman2022flow,albergo2022building,liu2022flow,neklyudov2023action,heitz2023iterative,tong2023improving}, which simplifies the training by regressing a vector field against a target field derived from pairs of samples, avoiding the need for simulating ODEs during training. In ICTM~\cite{Zhang2024FlowPF}, flow priors of generative models have been employed for MAP estimation to solve linear inverse problems. Our work leverages flow-based models to learn complex distributions as a prior for latent space alignment.

\subsection{Latent Space Alignment}
The alignment of latent spaces with predefined distributions is a crucial aspect of representation learning. In VAE~\cite{kingma2013auto}, the latent space is typically regularized to follow a standard Gaussian distribution. Several approaches have been proposed to use more flexible priors, such as hierarchical VAEs~\cite{Snderby2016LadderVA,Vahdat2020NVAEAD} or VAEs with inverse autoregressive flow (IAF) priors~\cite{Kingma2016ImprovedVI}. Another line of work focuses on aligning latent spaces with features extracted from pre-trained models~\cite{qiu2025robust,li2024imagefolder,li2024xq,chen2025masked,yao2025reconstruction,chen2024softvq,kim2025democratizing,zha2024language}. Our method differs by utilizing a pre-trained flow model to define an expressive target and a novel alignment loss, avoiding expensive likelihoods, adversarial training, or direct per-sample feature comparison.

\section{Preliminaries}

\subsection{Flow-Based Models}

We consider an ordinary differential equation (ODE) ideally defined by a time-dependent velocity field $\bm{u}(\bm{x}_t, t)$. The dynamics are given by:
\begin{equation}
    \label{eq:ode_ideal}
    \frac{\mathrm{d} \bm{x}_t}{\mathrm{d} t} = \bm{u}(\bm{x}_t, t), \quad \bm{x}_0 \sim p_{\mathrm{init}}, \quad \bm{x}_1 \sim p_{\mathrm{data}}
\end{equation}
Here, $p_{\mathrm{init}}$ is a simple prior distribution (e.g., a standard Gaussian distribution $\mathcal{N}(\bm{0}, \bm{I})$), and $p_{\mathrm{data}}$ is the target data distribution. We denote $\bm{x}_t \in \R^d$ as the state at time $t$, with $\bm{x}_0$ being the initial state and $\bm{x}_1$ the state at $t=1$. The velocity field $\bm{u}: \R^d \times [0,1] \to \R^d$ is assumed to be Lipschitz continuous in $\bm{x}$ and continuous in $t$ to ensure the existence and uniqueness of ODE solutions.

In practice, the ideal velocity field $\bm{u}$ is unknown. We approximate it with a parametric model, typically a neural network $\bm{v}_\theta(\bm{x}_t, t)$, parameterized by $\theta$. This defines a learned generative process:
\begin{equation}
    \label{eq:ode_model}
    \frac{\mathrm{d} \bm{x}_t}{\mathrm{d} t} = \bm{v}_\theta(\bm{x}_t, t), \quad \bm{x}_0 \sim p_{\mathrm{init}}
\end{equation}
For a given initial condition $\bm{x}_0$, the solution to this ODE, denoted by $\bm{x}_t = \bm{\Phi}_t^\theta(\bm{x}_0)$, is a trajectory (or flow) evolving from $\bm{x}_0$. The aim is to train $\bm{v}_\theta$ such that the $\bm{x}_1 = \bm{\Phi}_1^\theta(\bm{x}_0)$ matches $p_{\mathrm{data}}$.

Flow matching techniques~\cite{lipman2022flow,liu2022flow} train $\bm{v}_\theta$ by minimizing its difference from a target velocity field. This target field is often defined by constructing a probability path $p_t(\bm{x})$ that interpolates between $p_{\mathrm{init}}$ and $p_{\mathrm{data}}$. A common choice is a conditional path $\bm{x}_t(\bm{x}_0, \bm{x}_1)$ defined for pairs $(\bm{x}_0, \bm{x}_1)$ sampled from $p_{\mathrm{init}} \times p_{\mathrm{data}}$. For instance, Rectified Flow uses a linear interpolation: $\bm{x}_t(\bm{x}_0, \bm{x}_1) = (1-t) \bm{x}_0 + t \bm{x}_1$. The target velocity field corresponding to this path is $\bm{u}_t(\bm{x}_t | \bm{x}_0, \bm{x}_1) = \bm{x}_1 - \bm{x}_0$. The neural network $\bm{v}_\theta$ is then trained to predict this target field by minimizing the flow matching loss:
\begin{equation}
    \label{eq:loss}
    \mathcal{L}_{\text{FM}}(\theta) = \E_{t \sim \mathcal{U}[0, 1], \bm{x}_0 \sim p_{\mathrm{init}}, \bm{x}_1 \sim p_{\mathrm{data}}} \left[ \| \bm{v}_\theta((1-t)\bm{x}_0 + t \bm{x}_1, t) - (\bm{x}_1 - \bm{x}_0) \|^2 \right]
\end{equation}
In this paper, we consider $\bm{v}_\theta$ to be pre-trained, fixed, and optimal. That is, $\bm{v}_\theta$ is assumed to have perfectly minimized Eq.~\eqref{eq:loss}, such that $\bm{v}_\theta((1-t)\bm{x}_0 + t \bm{x}_1, t) = \bm{x}_1 - \bm{x}_0$ for all $\bm{x}_0 \sim p_{\mathrm{init}}$, $\bm{x}_1 \sim p_{\mathrm{data}}$, and $t \in [0,1]$. Such a $\bm{v}_\theta$ can serve as a regularizer to align latents to the target distribution.

\subsection{Likelihood Estimation with Flow Priors}

Let $p_1^{\bm{v}_\theta}(\bm{x}_1)$ denote the probability density at $t=1$ induced by the flow model $\bm{v}_\theta$ evolving from $p_{\mathrm{init}}$. Using the instantaneous change of variables formula, the log-likelihood of a sample $\bm{x}_1$ under this model can be computed by~\cite{chen2018neural,grathwohl2018ffjord}:
\begin{equation}
    \label{eq:likelihood}
    \log p_1^{\bm{v}_\theta}(\bm{x}_1) = \log p_{\mathrm{init}}(\bm{x}_0) - \int_0^1 \mathrm{Tr} (\nabla_{\bm{x}} \bm{v}_\theta(\bm{x}_s, s)) \mathrm{d} s
\end{equation}
Here, $\bm{x}_s = \bm{\Phi}_s^\theta(\bm{x}_0)$ is the trajectory generated by $\bm{v}_\theta$ starting from $\bm{x}_0$ and ending at $\bm{x}_1 = \bm{\Phi}_1^\theta(\bm{x}_0)$. Thus, $\bm{x}_0 = (\bm{\Phi}_1^\theta)^{-1}(\bm{x}_1)$ is obtained by flowing $\bm{x}_1$ backward in time to $t=0$.
Given a pre-trained flow model $\bm{v}_\theta$ that maps $p_{\mathrm{init}}$ (e.g., Gaussian noise) to a target distribution (e.g., target features), one can align new input samples with these target features by maximizing their log-likelihood under $p_1^{\bm{v}_\theta}$.
However, computing Eq.~\eqref{eq:likelihood} is often computationally expensive, primarily due to the trace of the Jacobian term ($\mathrm{Tr}(\nabla_{\bm{x}} \bm{v}_\theta)$) and the need for an ODE solver.
In this paper, we demonstrate that a similar alignment objective can be achieved by minimizing the flow matching loss Eq.~\eqref{eq:loss} with respect to $\bm{x}_1$, treating $\bm{x}_1$ as a variable to be optimized rather than a fixed sample from $p_{\mathrm{data}}$.

\section{Method}

In this paper, we aim to align a learnable latent space, whose latents are denoted by $\bm{y}$, to a target distribution $p_{\mathrm{data}}$. We first describe the overall pipeline in Sec.~\ref{sec:pipeline}. Our method leverages a pre-trained flow model to implicitly capture $p_{\mathrm{data}}$ and subsequently align the latents $\bm{y}$. Then, we provide an intuitive explanation in Sec.~\ref{sec:intuitive} and a formal proof of the proposed method in Sec.~\ref{sec:proof}.

\subsection{Pipeline}
\label{sec:pipeline}

Let $\bm{y} \in \R^{d_1}$ denote a sample from a learnable latent space. These latents $\bm{y}$ are typically produced by a parametric model $G_\phi$ (e.g., the encoder of an AE), whose parameters $\phi$ we aim to train. Let $\bm{x} \in \R^{d_2}$ be a sample from the target feature space, characterized by an underlying distribution $\bm{x} \sim p_{\mathrm{data}}(\bm{x})$. Our objective is to train $G_\phi$ such that the distribution of its outputs, $p_\phi(\bm{y})$, aligns to $p_{\mathrm{data}}(\bm{x})$. This alignment can be formulated as maximizing the likelihood of $\bm{y}$ under $p_{\mathrm{data}}$. For instance, in an AE setting where we wish the latent space (from which $\bm{y}$ is sampled) to conform to the distribution of features from a pre-trained feature extractor (from which $\bm{x}$ is sampled), our method facilitates this alignment.

\paragraph{Addressing the Dimension Mismatch} A challenge arises if the latent space dimension $d_1$ differs from the target feature space dimension $d_2$. To address this, we employ fixed (non-learnable) linear projections to map target features $\bm{x}$ from $\R^{d_2}$ to $\R^{d_1}$. We still keep the notation for the projected features and their distribution as $\bm{x}$ and $p_{\mathrm{data}}$ respectively for simplicity. We consider three alternative projection operators: \textit{Random Projection}, \textit{Average Pooling}, and \textit{PCA}. We ablate these methods in Sec.~\ref{sec:ablation} and select random projection as the default due to its simplicity and empirical effectiveness.

The use of linear projection is theoretically supported by the Johnson-Lindenstrauss (JL) lemma~\cite{johnson1984extensions}. The JL lemma states that for a set of $N$ points in $\R^{d_2}$, a random linear mapping can preserve all pairwise Euclidean distances within a multiplicative distortion factor.
The linear projection is defined by a matrix $\bm{W} \in \R^{d_1 \times d_2}$.
Assuming the target features $\bm{x}$ are appropriately normalized, we initialize the projection matrix $\bm{W}$ by sampling its entries from $\mathcal{N}(0, 1/d_2)$. This scaling helps ensure that the components of $\bm{x}$ also have approximately unit variance if the components of $\bm{x}$ are uncorrelated, thereby preserving key statistical properties.

\paragraph{Flow Prior Estimation}
With the projected target features $\bm{x} \sim p_{\mathrm{data}}$, we first train a flow model $\bm{v}_{\theta}: \R^{d_1} \times [0,1] \rightarrow \R^{d_1}$, parameterized by $\theta$. This model is trained using the flow matching objective Eq.~\eqref{eq:loss}, where the dimension $d$ is set to $d_1$, $\bm{x}_0 \sim \mathcal{N}(\bm{0}, \bm{I})$, and $\bm{x}_1$ is replaced by samples $\bm{x}$ from $p_{\mathrm{data}}$. After training, the parameters $\theta$ of the flow model $\bm{v}_\theta$ are frozen. This fixed $\bm{v}_\theta$ implicitly defines a generative process capable of transforming samples from $p_{\mathrm{init}}$ (now in $\R^{d_1}$) into samples that approximate $p_{\mathrm{data}}$. It captures the underlying target distribution and serves as a distributional prior for aligning the latent space.

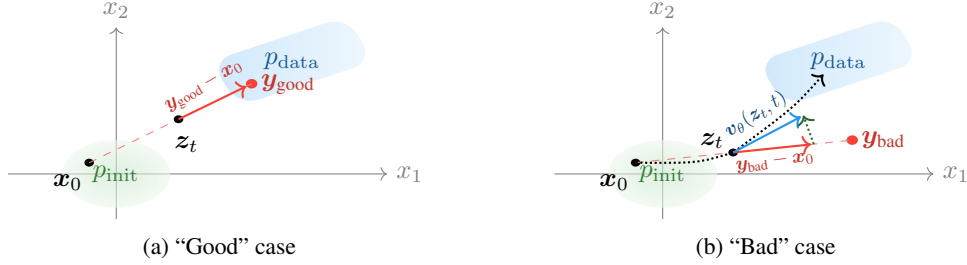
\begin{figure}[htbp]
    \centering
    \definecolor{customgreen}{HTML}{2196f3}
    \definecolor{customblue}{HTML}{4caf50}
    \definecolor{customred}{HTML}{f44336}
    \begin{subfigure}[t]{0.48\textwidth}
        \centering
        \begin{tikzpicture}[xscale=1.8, yscale=1.5, every node/.style={scale=1.0}]
            \draw[->, gray] (-0.8,0) -- (2.0,0) node[right] {$x_1$};
            \draw[->, gray] (0,-0.4) -- (0,1.3) node[above] {$x_2$};
        
            \shade[inner color=customblue!30, outer color=customblue!10, opacity=0.6] (0,0) ellipse (0.4cm and 0.3cm);
            \node[customblue!70!black] at (0,0) {$p_{\mathrm{init}}$};
            \coordinate (x0) at (-0.2, 0.1);
            \fill (x0) circle (1pt) node[below left, shift={(1pt,-1pt)}] {$\bm{x}_0$};
        
            \shade[left color=customgreen!30, right color=customgreen!10, opacity=0.7, rounded corners=8pt] (0.9,0.6) -- (1.9,1.0) -- (1.7,1.4) -- (0.7, 1.0) -- cycle;
            \node[customgreen!60!black] at (1.3,1.0) {$p_{\mathrm{data}}$};
        
            \coordinate (y_good) at (1.0, 0.8);
            \fill[customred] (y_good) circle (1.2pt) node[right, shift={(0pt,0pt)}, customred!80!black] {$\bm{y}_{\text{good}}$};
        
            \draw[dashed, customred!70] (x0) -- (y_good);
            \coordinate (Zt_good) at ($(x0)!0.55!(y_good)$); 
            \fill (Zt_good) circle (1pt) node[below, shift={(3pt,-3pt)}] {$\bm{z}_t$};
        
            \draw[->, customred, thick, shorten >=1pt] (Zt_good) -- ($(Zt_good)!0.6cm!(y_good)$) node[above, shift={(-0pt,1pt)}, customred!80!black, scale=0.75, midway, sloped] {$\bm{y}_{\text{good}} - \bm{x}_0$};
        
        \end{tikzpicture}
        \caption{``Good'' case}
    \end{subfigure}
    \hfill
    \begin{subfigure}[t]{0.48\textwidth}
        \centering
        \begin{tikzpicture}[xscale=1.8, yscale=1.5, every node/.style={scale=1.0}]
            \draw[->, gray] (-0.8,0) -- (2.0,0) node[right] {$x_1$};
            \draw[->, gray] (0,-0.4) -- (0,1.3) node[above] {$x_2$};
        
            \shade[inner color=customblue!30, outer color=customblue!10, opacity=0.6] (0,0) ellipse (0.4cm and 0.3cm);
            \node[customblue!70!black] at (0,0) {$p_{\mathrm{init}}$};
            \coordinate (x0) at (-0.2, 0.1);
            \fill (x0) circle (1pt) node[below left, shift={(1pt,-1pt)}] {$\bm{x}_0$};
        
            \shade[left color=customgreen!30, right color=customgreen!10, opacity=0.7, rounded corners=8pt] (0.9,0.6) -- (1.9,1.0) -- (1.7,1.4) -- (0.7, 1.0) -- cycle;
            \node[customgreen!60!black] at (1.3,1.0) {$p_{\mathrm{data}}$};
            \coordinate (flow_actual_target) at (1.2, 0.9); 
        
            \coordinate (y_bad) at (1.4, 0.3);
            \fill[customred] (y_bad) circle (1.2pt) node[right, shift={(0pt,0pt)}, customred!80!black] {$\bm{y}_{\text{bad}}$};
        
            \draw[dashed, customred!70] (x0) -- (y_bad);
            \coordinate (Zt_bad) at ($(x0)!0.45!(y_bad)$); 
            \fill (Zt_bad) circle (1pt) node[above right, shift={(-16pt,-2pt)}] {$\bm{z}_t$};
        
            \draw[->, customred, thick, shorten >=1pt] (Zt_bad) -- ($(Zt_bad)!0.6cm!(y_bad)$) 
                node[text opacity=1, inner sep=1pt, below=1pt, customred!80!black, scale=0.75, midway, sloped] {$\bm{y}_{\text{bad}} - \bm{x}_0$};

            \coordinate (control1) at ($(x0)!0.5!(Zt_bad) + (0.1,-0.05)$); 
            \coordinate (control2) at ($(Zt_bad) + (0.45,0.4)$); 
            \draw[->, black, thick, densely dotted, shorten >=1pt] 
                (x0) .. controls (control1) .. (Zt_bad)
                .. controls (control2) .. (flow_actual_target);

            \coordinate (arrow1_end) at ($(Zt_bad)!0.6cm!(y_bad)$);
            \coordinate (arrow2_end) at ($(Zt_bad)!0.6cm!(control2) + (0.07, -0.07)$);

            \draw[->, customgreen, thick, shorten >=1pt] (Zt_bad) -- (arrow2_end)
                node[text opacity=1, inner sep=1pt, above=4pt, customgreen!50!black, scale=0.75, midway, sloped] {$\bm{v}_\theta(\bm{z}_t, t)$};

            \draw[->, customblue!60!black, thick, densely dotted] (arrow1_end) -- (arrow2_end);
        \end{tikzpicture}
        \caption{``Bad'' case}
    \end{subfigure}

    \caption{Intuitive illustration of latent space alignment via flow matching, best viewed in color. (a) A ``good'' $\bm{y}_{\text{good}}$ in $p_{\mathrm{data}}$ (green) aligns the straight path velocity (red solid arrow) with the pre-trained flow model's velocity $\bm{v}_\theta(\bm{z}_t,t)$ (overlapped and omitted), yielding low loss. (b) A ``bad'' $\bm{y}_{\text{bad}}$ outside $p_{\mathrm{data}}$ causes a mismatch between the path velocity and $\bm{v}_\theta(\bm{z}_t,t)$ (green solid arrow), resulting in high loss. Minimizing this loss steers $y_{\text{bad}}$ to $p_{\mathrm{data}}$ (blud dotted arrow).}
    \label{fig:intuitive}
\end{figure}

\paragraph{Latent Space Regularization}
Once $\bm{v}_\theta$ is trained and its parameters fixed, we use it to regularize the learnable latents $\bm{y}$. 
The goal is to encourage the distribution $p_\phi(\bm{y})$ to conform to $p_{\mathrm{data}}$ as captured by $\bm{v}_\theta$. 
For each $\bm{y}$ produced by $G_\phi$, we incorporate the flow matching objective described in Eq.~\eqref{eq:loss} into the training objective of $G_\phi$:
\begin{equation}
    \label{eq:loss_y}
    \mathcal{L}_{\text{align}}(\bm{y}; \theta) = \E_{t \sim \U[0, 1], \bm{x}_0 \sim p_{\mathrm{init}}(\bm{x}_0)} \left[ \| \bm{v}_\theta((1-t)\bm{x}_0 + t \bm{y}, t) - (\bm{y} - \bm{x}_0) \|^2 \right]
\end{equation}
Here, $p_{\mathrm{init}}$ is the same $d_1$-dimensional base distribution $\mathcal{N}(\bm{0}, \bm{I})$ used for training $\bm{v}_\theta$. 
In Sec.~\ref{sec:proof}, we formally prove that minimizing Eq.~\eqref{eq:loss_y} with respect to $\bm{y}$ serves as a proxy to maximizing a lower bound on the log-likelihood $\log p_1^{\bm{v}_\theta}(\bm{y})$. This establishes that minimizing $\mathcal{L}_{\text{align}}(\bm{y}; \theta)$ effectively trains $G_\phi$ such that its outputs $\bm{y}$ align with the distribution of the target features $\bm{x}$.

The key insight is that the pre-trained velocity field $\bm{v}_\theta$ encapsulates the dynamics that transport probability mass from the base distribution $p_{\mathrm{init}}$ to the target distribution $p_{\mathrm{data}}$ along linear paths. By minimizing $\mathcal{L}_{\text{align}}(\bm{y}; \theta)$, we penalize latents $\bm{y}$ that do not conform to these learned dynamics—that is, $\bm{y}$ values for which the path $(1-t)\bm{x}_0 + t\bm{y}$ is not "natural" under $\bm{v}_\theta$. This procedure shapes $p_\phi(\bm{y})$ to match $p_{\mathrm{data}}$ without requiring explicit computation of potentially intractable likelihoods, relying instead on the computationally efficient flow matching objective.

\subsection{Intuitive Explanation}
\label{sec:intuitive}

Our alignment method leverages the pre-trained flow model, $\bm{v}_\theta$, as an expert on the target feature distribution $p_{\mathrm{data}}$. Having been well trained, $\bm{v}_\theta$ precisely captures the dynamics required to transform initial noise samples $\bm{x}_0$ into target features $\bm{x}$ along straight interpolation paths. Specifically, it has learned to predict the exact velocity $\bm{x} - \bm{x}_0$ at any point $(1-t)\bm{x}_0 + t\bm{x}$ along such a path. This effectively means $\bm{v}_\theta$ can validate whether a given trajectory from noise is characteristic of those leading to the true target distribution.

We utilize this knowledge to shape the distribution of our learnable latents $\bm{y}$. The alignment loss, $\mathcal{L}_{\text{align}}(\bm{y}; \theta)$, challenges $\bm{v}_\theta$: for a given $\bm{y}$ and a random $\bm{x}_0$, it asks whether the velocity field predicted by $\bm{v}_\theta$ along the straight path $(1-t)\bm{x}_0 + t\bm{y}$ matches the path's inherent velocity, $\bm{y} - \bm{x}_0$. If $\bm{y}$ is statistically similar to samples from $p_{\mathrm{data}}$, this match will be close, resulting in a low loss. Conversely, a significant mismatch indicates that $\bm{y}$ is not a plausible target according to the learned dynamics, yielding a high loss. By minimizing this loss (by optimizing the generator $G_\phi$ that produces $\bm{y}$), we iteratively guide $\bm{y}$ towards regions where its connecting path from noise is endorsed by $\bm{v}_\theta$. As depicted in Fig.~\ref{fig:intuitive}, this process progressively aligns the distribution of $\bm{y}$ (blue) with the target distribution $p_{\mathrm{data}}$ (orange), achieving distributional conformity.

\subsection{Relating the Alignment Loss to an ELBO on Log-Likelihood}
\label{sec:proof}

In this section, we demonstrate that minimizing the alignment loss $\mathcal{L}_{\text{align}}(\bm{y}; \theta)$ (Eq.~\eqref{eq:loss_y}) with respect to a given $\bm{y} \in \R^{d_1}$ corresponds to maximizing a variational lower bound (ELBO) on the log-likelihood $\log p_1^{\bm{v}_\theta}(\bm{y})$. Here, $p_1^{\bm{v}_\theta}(\bm{y})$ denotes the probability density at $t=1$ induced by the ODE dynamics $\frac{\mathrm{d}\bm{z}_t}{\mathrm{d}t} = \bm{v}_\theta(\bm{z}_t, t)$, with $\bm{z}_0 \sim p_{\mathrm{init}}$.

\begin{proposition}
Let $\bm{v}_\theta: \R^{d_1} \times [0,1] \to \R^{d_1}$ be a given velocity field, and $p_{\mathrm{init}}$ be a base distribution. For $\bm{y} \in \R^{d_1}$, the log-likelihood $\log p_1^{\bm{v}_\theta}(\bm{y})$ is lower-bounded as:
\begin{equation}
    \label{eq:prop_elbo_statement}
\log p_1^{\bm{v}_\theta}(\bm{y}) \ge C(\bm{y}) -  \lambda \mathcal{L}_{\text{align}}(\bm{y}; \theta),
\end{equation}
where $\lambda > 0$ is a constant, $\mathcal{L}_{\text{align}}(\bm{y}; \theta)$ is defined in Eq.~\eqref{eq:loss_y}, and $C(\bm{y})$ is dependent on $\bm{y}$ and $\bm{v}_\theta$.
\end{proposition}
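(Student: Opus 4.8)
The plan is to promote the instantaneous change-of-variables formula Eq.~\eqref{eq:likelihood} to a line integral along an \emph{arbitrary} curve, specialize to the straight-line interpolation that appears inside $\mathcal{L}_{\text{align}}$, average over the base noise, and then absorb the resulting error into $C(\bm{y})$ via Young's inequality. Concretely, write $p_t := p_t^{\bm{v}_\theta}$ for the density at time $t$ of the flow generated by $\bm{v}_\theta$ from $p_{\mathrm{init}}$; by construction it satisfies the continuity equation, equivalently $\partial_t\log p_t + \bm{v}_\theta\cdot\nabla\log p_t + \nabla\cdot\bm{v}_\theta = 0$. For any smooth curve $\bm{\psi}_t$ with $\bm{\psi}_1=\bm{y}$, differentiating $\log p_t(\bm{\psi}_t)$ in $t$ and substituting this relation gives $\tfrac{d}{dt}\log p_t(\bm{\psi}_t) = (\dot{\bm{\psi}}_t-\bm{v}_\theta(\bm{\psi}_t,t))\cdot\nabla\log p_t(\bm{\psi}_t) - \nabla\cdot\bm{v}_\theta(\bm{\psi}_t,t)$, and integrating over $t\in[0,1]$ yields the exact identity
\begin{equation*}
\log p_1^{\bm{v}_\theta}(\bm{y}) = \log p_{\mathrm{init}}(\bm{\psi}_0) - \int_0^1 \nabla\cdot\bm{v}_\theta(\bm{\psi}_t,t)\,dt + \int_0^1 \big(\dot{\bm{\psi}}_t-\bm{v}_\theta(\bm{\psi}_t,t)\big)\cdot\nabla\log p_t(\bm{\psi}_t)\,dt ,
\end{equation*}
which recovers Eq.~\eqref{eq:likelihood} when $\bm{\psi}_t$ is the true ODE trajectory (the last integral then vanishing).

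Next I would set $\bm{\psi}_t=(1-t)\bm{x}_0+t\bm{y}$, so $\bm{\psi}_0=\bm{x}_0$, $\dot{\bm{\psi}}_t=\bm{y}-\bm{x}_0$, and the mismatch $\dot{\bm{\psi}}_t-\bm{v}_\theta(\bm{\psi}_t,t)$ is precisely the integrand appearing in $\mathcal{L}_{\text{align}}$. Since the left-hand side does not depend on $\bm{x}_0$, taking $\E_{\bm{x}_0\sim p_{\mathrm{init}}}$ of the identity leaves it unchanged while collecting the first two right-hand terms into a quantity $A(\bm{y})$ depending only on $\bm{y}$ and $\bm{v}_\theta$. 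For the remaining cross term I apply Young's inequality $\bm{a}\cdot\bm{b}\ge -\tfrac{1}{2\epsilon}\|\bm{a}\|^2-\tfrac{\epsilon}{2}\|\bm{b}\|^2$ pointwise in $t$, with $\bm{a}$ the mismatch and $\bm{b}=\nabla\log p_t(\bm{\psi}_t)$; after integrating and averaging over $\bm{x}_0$, the $\|\bm{a}\|^2$ contribution is exactly $\mathcal{L}_{\text{align}}(\bm{y};\theta)$ (since $t\sim\U[0,1]$). Setting $\lambda=\tfrac{1}{2\epsilon}$ and $C(\bm{y})=A(\bm{y})-\tfrac{\epsilon}{2}\,\E_{\bm{x}_0}\!\int_0^1\|\nabla\log p_t(\bm{\psi}_t)\|^2\,dt$ --- a function of $\bm{y}$ and $\bm{v}_\theta$ alone --- then gives exactly Eq.~\eqref{eq:prop_elbo_statement}.

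The main obstacle is the first step: the ordinary change of variables holds only along the exact flow trajectory, whereas $\mathcal{L}_{\text{align}}$ is built from straight-line paths, and bridging this gap inevitably brings in the score $\nabla\log p_t$; the deterministic ODE setting precludes a Girsanov-type shortcut, so I route through the continuity equation and then discharge the extra term into $C(\bm{y})$. The only regularity caveat is that $C(\bm{y})$ is finite provided $\int_0^1\|\nabla\log p_t((1-t)\bm{x}_0+t\bm{y})\|^2\,dt$ is integrable in $\bm{x}_0$, which holds whenever $p_{\mathrm{data}}$ (hence $p_1$) has a well-behaved score near $\bm{y}$ --- an assumption already implicit in using Eq.~\eqref{eq:likelihood}. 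Note the argument never uses optimality of $\bm{v}_\theta$, matching the generality of the statement.
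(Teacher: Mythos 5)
Your argument is correct and takes a genuinely different route from the paper's. The paper's proof invokes, essentially as a black box, a ``known variational lower bound'' for continuous-time generative models (attributing it to works on Neural ODEs, FFJORD, and Rectified Flow) and then merely specializes: plug in the straight-line paths, take $q(\bm{x}_0|\bm{y})=p_{\mathrm{init}}(\bm{x}_0)$ so that $\log p_{\mathrm{init}}-\log q$ cancels, set $\lambda_s=\lambda$, and read off $C(\bm{y})=-\E_{s,\bm{x}_0}[\mathrm{Tr}(\nabla_{\bm{z}}\bm{v}_\theta)]$. You instead rebuild the inequality from scratch: the continuity equation for $p_t^{\bm{v}_\theta}$ yields an \emph{exact} identity along any curve $\bm{\psi}_t$ terminating at $\bm{y}$, whose ``extra'' piece is the cross term $\int_0^1(\dot{\bm{\psi}}_t-\bm{v}_\theta)\cdot\nabla\log p_t\,dt$; a pointwise Young inequality then trades that cross term for $-\lambda\|\cdot\|^2$ plus a score-energy term you fold into $C(\bm{y})$. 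This is fully self-contained and makes \emph{transparent} where the free weight $\lambda$ comes from and what it costs, something the paper's citation-based derivation hides (the cited ELBO there is asserted with a free $\lambda_s>0$ and no visible compensating term, which your analysis shows must in fact be absorbed somewhere). The trade-off is that your $C(\bm{y})$ acquires two additional nonpositive contributions the paper's does not carry --- the constant $\E_{\bm{x}_0}[\log p_{\mathrm{init}}(\bm{x}_0)]$ (negative entropy of the prior) and $-\tfrac{\epsilon}{2}\E_{\bm{x}_0}\!\int_0^1\|\nabla\log p_t(\bm{\psi}_t)\|^2\,dt$ --- so your bound is looser term-by-term, but since the proposition only claims the \emph{existence} of some $C(\bm{y})$ depending on $\bm{y}$ and $\bm{v}_\theta$, your construction fully discharges the claim. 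You correctly note that optimality of $\bm{v}_\theta$ is never used, matching the paper's own remark; the regularity caveats you flag (integrability of the score along the straight segments) are of the same flavor as, and no stronger than, the paper's own smoothness and boundedness assumptions on $\bm{v}_\theta$.
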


\begin{proof}
We establish this result by constructing a specific variational lower bound on $\log p_1^{\bm{v}_\theta}(\bm{y})$. Variational lower bounds for log-likelihoods in continuous-time generative models can be constructed by introducing a proposal distribution for the latents that could generate $\bm{y}$. Consider a family of "proposal" paths~\cite{lipman2022flow}, which are straight lines interpolating from an initial point $\bm{x}_0 \sim p_{\mathrm{init}}$ to the given point $\bm{y}$:
\begin{equation}
    \label{eq:proof_straight_path_def}
    \bm{z}_s(\bm{x}_0, \bm{y}) = (1-s)\bm{x}_0 + s\bm{y}, \quad s \in [0,1]
\end{equation}
The velocity of such a path is constant: $\dot{\bm{z}}_s(\bm{x}_0, \bm{y}) = \bm{y} - \bm{x}_0$.
We adopt a variational distribution over the initial states $\bm{x}_0$, conditioned on $\bm{y}$, as $q(\bm{x}_0 | \bm{y}) = p_{\mathrm{init}}(\bm{x}_0)$. That is, we consider initial states drawn from the prior, irrespective of $\bm{y}$ for the functional form of $q$.

A known variational lower bound on $\log p_1^{\bm{v}_\theta}(\bm{y})$s~\cite{chen2018neural,grathwohl2018ffjord,liu2022flow}
 can be written a:
\begin{align}
    \label{eq:proof_elbo_form_general}
    \log p_1^{\bm{v}_\theta}(\bm{y}) \ge \mathbb{E}_{\bm{x}_0 \sim q(\cdot|\bm{y})} \Bigg[ & \log p_{\mathrm{init}}(\bm{x}_0) - \int_0^1 \left( \lambda_s \| \dot{\bm{z}}_s(\bm{x}_0, \bm{y}) - \bm{v}_\theta(\bm{z}_s(\bm{x}_0, \bm{y}), s) \|^2 \right) \mathrm{d}s \nonumber \\
    & - \log q(\bm{x}_0|\bm{y}) - \int_0^1 \left( \mathrm{Tr}(\nabla_{\bm{z}_s} \bm{v}_\theta(\bm{z}_s(\bm{x}_0, \bm{y}), s)) \right) \mathrm{d}s \Bigg]
\end{align}
Here, $\lambda_s > 0$ is a time-dependent weighting factor. For simplicity and consistency with the definition of $\mathcal{L}_{\text{align}}$ (Eq.~\eqref{eq:loss_y}), we set $\lambda_s = \lambda = 1$ for all $s \in [0,1]$.
With $q(\bm{x}_0|\bm{y}) = p_{\mathrm{init}}(\bm{x}_0)$, the term $\log p_{\mathrm{init}}(\bm{x}_0) - \log q(\bm{x}_0|\bm{y})$ vanishes.
Substituting the expressions for $\bm{z}_s(\bm{x}_0, \bm{y})$ from Eq.~\eqref{eq:proof_straight_path_def} and its velocity $\dot{\bm{z}}_s(\bm{x}_0, \bm{y}) = \bm{y} - \bm{x}_0$:
\begin{align}
\label{eq:proof_elbo_final_substituted}
\log p_1^{\bm{v}_\theta}(\bm{y}) \ge & - \mathbb{E}_{\bm{x}_0 \sim p_{\mathrm{init}}} \left[ \int_0^1 \mathrm{Tr}(\nabla_{\bm{z}} \bm{v}_\theta((1-s)\bm{x}_0 + s\bm{y}, s)) \mathrm{d}s \right] \nonumber \\
& - \mathbb{E}_{\bm{x}_0 \sim p_{\mathrm{init}}} \left[ \int_0^1 \| (\bm{y} - \bm{x}_0) - \bm{v}_\theta((1-s)\bm{x}_0 + s\bm{y}, s) \|^2 \mathrm{d}s \right]
\end{align}
The second term in this inequality matches the definition of $\mathcal{L}_{\text{align}}(\bm{y}; \theta)$ (Eq.~\eqref{eq:loss_y}).
Let us define the first term of the ELBO's right-hand side. To maintain consistency with the expectation over time in $\mathcal{L}_{\text{align}}$, we can write:
\begin{equation}
\label{eq:proof_C_y_def}
C(\bm{y}) = - \mathbb{E}_{s \sim \mathcal{U}[0,1], \bm{x}_0 \sim p_{\mathrm{init}}} \left[ \mathrm{Tr}(\nabla_{\bm{z}} \bm{v}_\theta((1-s)\bm{x}_0 + s\bm{y}, s)) \right]
\end{equation}
So, the ELBO (Eq.~\eqref{eq:proof_elbo_final_substituted}) can be expressed as:
\begin{equation}
\label{eq:proof_elbo_simplified_final}
\log p_1^{\bm{v}_\theta}(\bm{y}) \ge C(\bm{y}) - \mathcal{L}_{\text{align}}(\bm{y}; \theta)
\end{equation}
This concludes the derivation of the lower bound as stated in the proposition (with $\lambda=1$).
\end{proof}

\paragraph{Interpretation and Significance}
The inequality \eqref{eq:proof_elbo_simplified_final} demonstrates that maximizing the derived lower bound with respect to $\bm{y}$ involves two parts: maximizing $C(\bm{y})$ and minimizing $\mathcal{L}_{\text{align}}(\bm{y}; \theta)$.
The term $\mathcal{L}_{\text{align}}(\bm{y}; \theta)$ directly measures how well the velocity field $\bm{v}_\theta$ predicts the velocity of that straight path, i.e., $\bm{y}-\bm{x}_0$. Minimizing this term forces $\bm{y}$ into regions where it behaves like a point reachable from $p_{\mathrm{init}}$ via a straight path whose dynamics are consistent with the learned $\bm{v}_\theta$. This is precisely the behavior expected if $\bm{y}$ were a sample from the distribution $p_{\mathrm{data}}$.

The term $C(\bm{y})$ represents the expected negative trace of the Jacobian of $\bm{v}_\theta$, averaged over the chosen straight variational paths. By minimizing $\mathcal{L}_{\text{align}}(\bm{y}; \theta)$, we are not strictly maximizing the ELBO in Eq.~\eqref{eq:proof_elbo_simplified_final} with respect to $\bm{y}$. Instead, we are optimizing a crucial component of it that directly enforces consistency with the learned flow dynamics. We analyze the behavior of $C(\bm{y})$ in Appendix~\ref{app:proof} to show that if $\bm{y}$ aligns with a more concentrated target distribution (making $\mathcal{L}_{\text{align}}(\bm{y}; \theta)$ small), $C(\bm{y})$ tends to be positive and larger, contributing favorably to the ELBO.
\begin{assumption}[Optimality of $\bm{v}_\theta$]
    \label{ass:vtheta_fixed}
    The velocity field $\bm{v}_\theta: \R^{d_1} \times [0,1] \to \R^{d_1}$ is (pre-trained) and optimal, satisfying
        $ \bm{v}_\theta((1-t)\bm{x}_0 + t\bm{x}_1, t) = \bm{x}_1 - \bm{x}_0 \quad \forall \bm{x}_0 \sim p_{\mathrm{init}}, \bm{x}_1 \sim p_{\mathrm{data}}, t \in [0,1]$.
\end{assumption}
To further interpret the method, we consider the Assumption~\ref{ass:vtheta_fixed} that $\bm{v}_\theta$ is optimally trained such that $\bm{v}_\theta((1-s)\bm{x}_0 + s\bm{x}_1, s) = \bm{x}_1 - \bm{x}_0$ for $\bm{x}_1 \sim p_{\mathrm{data}}$. If $\bm{y}$ is itself a sample from $p_{\mathrm{data}}$, then $\mathcal{L}_{\text{align}}(\bm{y}; \theta)$ would be (close to) zero. However, when optimizing an arbitrary $\bm{y}$, especially if it is far from $p_{\mathrm{data}}$, the $\mathcal{L}_{\text{align}}(\bm{y}; \theta)$ term can be substantial. Its minimization drives $\bm{y}$ towards regions of higher plausibility under the learned flow.

In practice, directly maximizing $\log p_1^{\bm{v}_\theta}(\bm{y})$ via Eq.~\eqref{eq:likelihood} is computationally demanding, requiring ODE solves and computation of Jacobian traces along these true ODE paths. Maximizing the full ELBO (Eq.~\eqref{eq:prop_elbo_statement}) would still require computing $C(\bm{y})$, which involves trace computations. By focusing on minimizing only $\mathcal{L}_{\text{align}}(\bm{y}; \theta)$, we adopt a computationally tractable proxy. This objective encourages $\bm{y}$ to have a high likelihood under $p_1^{\bm{v}_\theta}(\bm{y})$ by ensuring consistency with the learned flow dynamics, thereby aligning the distribution of $\bm{y}$ with the target distribution $p_{\mathrm{data}}$ implicitly modeled by $\bm{v}_\theta$. A more complete proof can be found in Appendix~\ref{app:proof}.


\section{Experiments}

This section presents an empirical validation of the proposed alignment method with flow priors. The investigation starts with an illustrative experiment in Sec.~\ref{sec:toy}. Subsequently, large-scale experiments are conducted on image generation tasks using the ImageNet dataset, as detailed in Sec.~\ref{sec:generation}. In Sec.~\ref{sec:ablation}, we conduct ablation studies of the proposed method.

\begin{figure}
    \begin{minipage}{0.3\textwidth}
        \centering
        {\small (a)}
        \includegraphics[width=\linewidth]{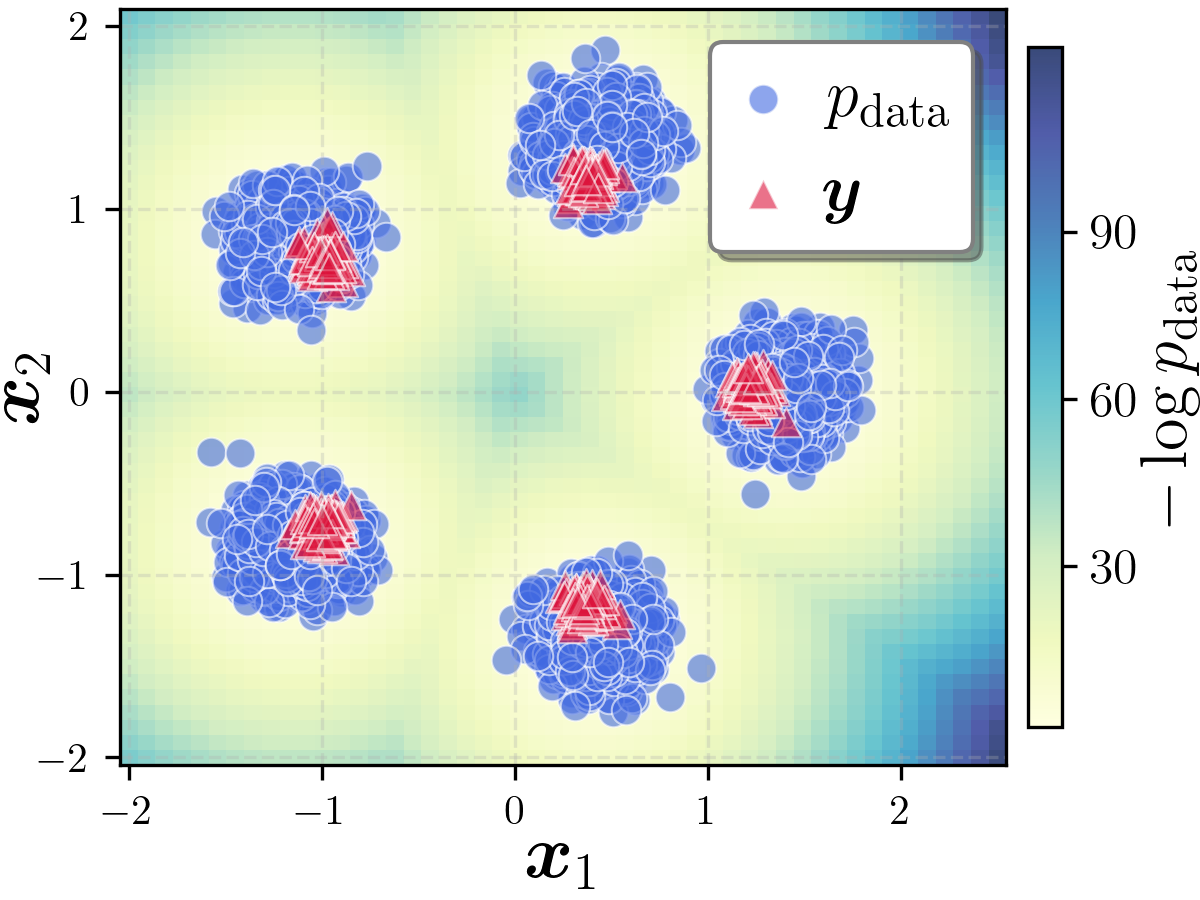}
    \end{minipage}
    \hfill
    \begin{minipage}{0.3\textwidth}
        \centering
        {\small (b)}
        \includegraphics[width=\linewidth]{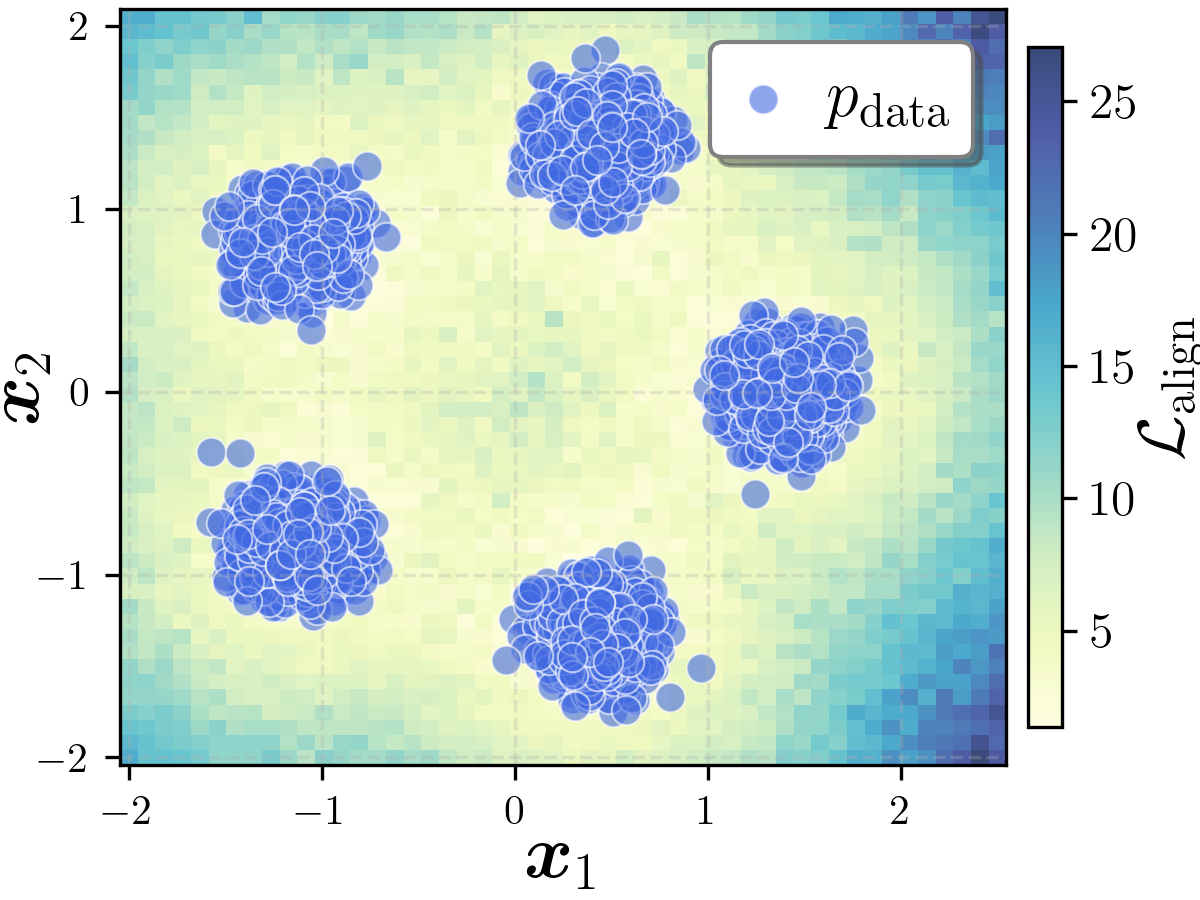}
    \end{minipage}
    \hfill
    \begin{minipage}{0.3\textwidth}
        \centering
        {\small (c)}
        \includegraphics[width=\linewidth]{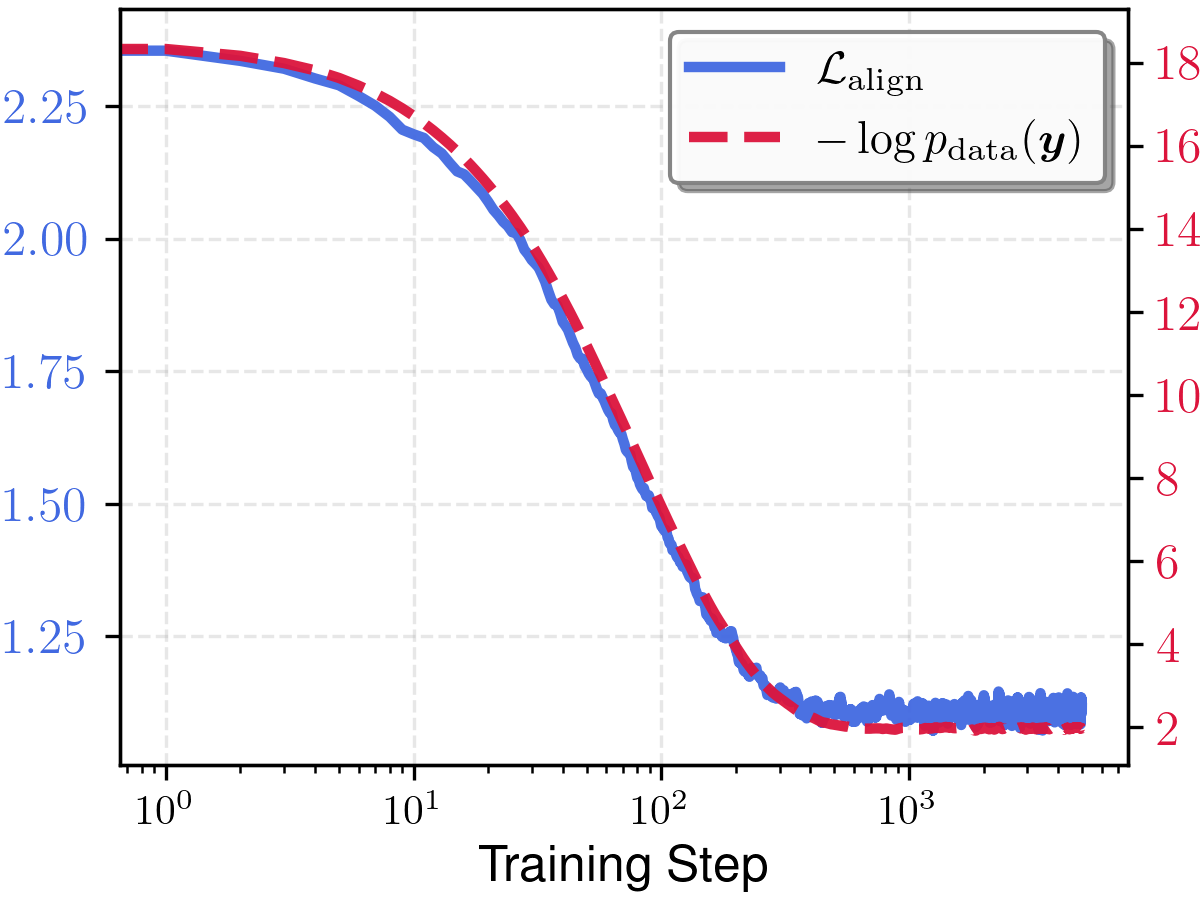}
    \end{minipage}
    \caption{Illustration with a Mixture of Gaussians distribution. (a) Aligned latent variables $\bm{y}$ (red triangles) concentrate in low negative log-likelihood (NLL) regions of $p_\text{data}$ (blue dots; heatmap shows $-\log p_\text{data}$). (b) Alignment loss $\mathcal{L}_{\text{align}}$ heatmap mirrors the NLL landscape of $p_\text{data}$, with $p_\text{data}$ samples in low-$\mathcal{L}_{\text{align}}$ areas. (c) $\mathcal{L}_{\text{align}}$ (blue solid) and $-\log p_\text{data}(\bm{y})$ (red dashed) decline simultaneously in training, showing $\mathcal{L}_{\text{align}}$ serves as a proxy for maximizing the log-likelihood of $\bm{y}$ under $p_\text{data}$.}
    \label{fig:toy_example}
    \vspace{-1em}
\end{figure}

\subsection{Toy Examples}
\label{sec:toy}

We present a toy example as an illustrative experiment in a 2D setting. The target distribution, denoted $p_{\text{data}}$, is configured as a mixture of five isotropic Gaussians. Following the methodology outlined in Sec.~\ref{sec:pipeline}, we first train a flow model $\bm{v}_\theta$ to map a standard 2D Normal distribution $\mathcal{N}(\bm{0}, \bm{I})$ to $p_{\text{data}}$. This flow model $\bm{v}_\theta$ is implemented by a multi-layer perceptron (MLP) incorporating adaptive layer normalization for time modulation~\cite{Peebles2022ScalableDM}. Upon completion of training, the parameters $\theta$ of this flow model are frozen.  Subsequently, instead of a parameterized model $G_\phi$, we directly initialize a set of learnable 2D variables as $\bm{y}$ and optimize them by minimizing the alignment loss $\mathcal{L}_{\text{align}}(\bm{y}; \theta)$.

The results are presented in Fig.~\ref{fig:toy_example}.
Fig.~\ref{fig:toy_example} (a) compares the target distribution $p_{\text{data}}$ (blue point samples) with the optimized variables $\bm{y}$ (red triangles). The background visualizes the negative log-likelihood (NLL) of $p_{\text{data}}$, which is computed analytically. It is evident that $\bm{y}$ successfully converges to the high log-likelihood regions of $p_{\text{data}}$.
Fig.~\ref{fig:toy_example} (b) displays the landscape of the alignment loss $\mathcal{L}_{\text{align}}$, which is estimated numerically with $\bm{v}_\theta$. The landscape mirrors the NLL surface of $p_{\text{data}}$ depicted in (a). Samples drawn from $p_{\text{data}}$ (blue dots) are concentrated in regions where $\mathcal{L}_{\text{align}}$ is low, suggesting that $\mathcal{L}_{\text{align}}$ effectively captures the underlying structure of the target distribution.
Fig.~\ref{fig:toy_example} (c) illustrates $\mathcal{L}_{\text{align}}$ (blue solid line) and the true NLL $-\log p_{\text{data}}(\bm{y})$ (red dashed line) during the training of $\bm{y}$. The alignment loss and the NLL exhibit a strong positive correlation, decreasing concomitantly throughout the training process.
More detailed toy examples can be found in Appendix~\ref{app:toy}.



\subsection{Image Generation}
\label{sec:generation}

Prior work has demonstrated that aligning the latent space of AEs with semantic encoders can enhance generative model performance~\cite{chen2025masked,yao2025reconstruction,chen2024softvq,qiu2025robust}.
To validate this observation and further showcase the efficacy of our proposed method, we conduct large-scale image generation experiments on the ImageNet-1K~\cite{deng2009imagenet} dataset at $256 \times 256$ resolution.

\paragraph{Implementation Details} 
\label{sec:implementation}
Our AE architecture employs two Vision Transformer (ViT)-Large~\cite{Dosovitskiy2020AnII} models, each with 391M parameters, serving as the latent encoder and decoder, respectively. 
The encoder maps input images to a latent space of 64 tokens, each with dimension 32, striking a balance between reconstruction quality and computational efficiency. We impose \textit{token-level} alignment on the latents. 
The alignment loss on the latents is set to $\lambda = 0.01$ by default. We also incorporate conventional reconstruction loss, perceptual loss, and adversarial loss on the pixel outputs~\cite{Esser2020TamingTF,rombach2022high}.

For the target distribution $p_{\text{data}}$, we investigate four distinct variants:  \textit{low-level} visual features from a VAE~\cite{kingma2013auto,Esser2020TamingTF}, continuous \textit{semantic} visual features from DinoV2~\cite{oquab2023dinov2}, \textit{discrete} visual codebook embeddings from LlamaGen VQ~\cite{sun2024autoregressive,van2017neural}, and \textit{textual} embeddings from Qwen~\cite{bai2023qwen}. Their feature dimensions are 32, 768, 8, 896, respectively.
The flow-based prior is modeled by a 6-layer MLP with 1024 hidden units, trained for 1 million steps using the AdamW~\cite{loshchilov2017decoupled} optimizer to match Assumption~\ref{ass:vtheta_fixed}. More details can be found in Appendix~\ref{app:implementation}.
\paragraph{Alignment Results}
Analogous to the toy example, we aim to correlate the alignment loss $\mathcal{L}_{\text{align}}$ with the NLL of latents under the target distribution $p_{\text{data}}$. 
Since the NLL is intractable for implicitly defined distributions, we estimate the density using $k$-nearest neighbors. 
The probability density $p(\bm{y})$ at a point $\bm{y}$ is inversely proportional to the volume of the hypersphere enclosing its $k^{\text{th}}$ nearest neighbor among the target samples. 
Consequently, the NLL can be estimated as $- \log p_{\text{data}}(\bm{y}) \propto \log r_k(\bm{y})$ where $r_k(\bm{y})$ is the Euclidean distance to the $k^{\text{th}}$ neighbor, and $D$ is the dimension.
We use $\log r_k(\bm{y})$ as our proxy measure for the NLL.
We first index the set of target distribution samples using Faiss~\cite{douze2024faiss}. 
During the training, we periodically sample 10k points from the latent space and measure the alignment quality by averaging the $\log r_k(\bm{y})$. 

The results are presented in Fig.~\ref{fig:alignment}. 
A strong correlation is observed between the alignment loss $\mathcal{L}_{\text{align}}$ and the $k$-NN distance proxy $\log r_k(\bm{y})$.
The only unstable case is the VQ variant, for which the GAN loss collapses during training due to its low dimension (8-dim), yet the general trend is still consistent.
This finding corroborates our conclusion that $\mathcal{L}_{\text{align}}$ serves as an effective proxy for the NLL of the latents under $p_{\text{data}}$.
Crucially, our method captures the underlying structure across diverse target distributions, spanning different forms (continuous, discrete) and modalities (visual, textual), even when applied to a large-scale dataset like ImageNet and a high-capacity model such as ViT-Large.
\begin{figure}
    \begin{minipage}{0.24\textwidth}
        \centering
        {\small VAE (Low-level)}
        \includegraphics[width=\linewidth]{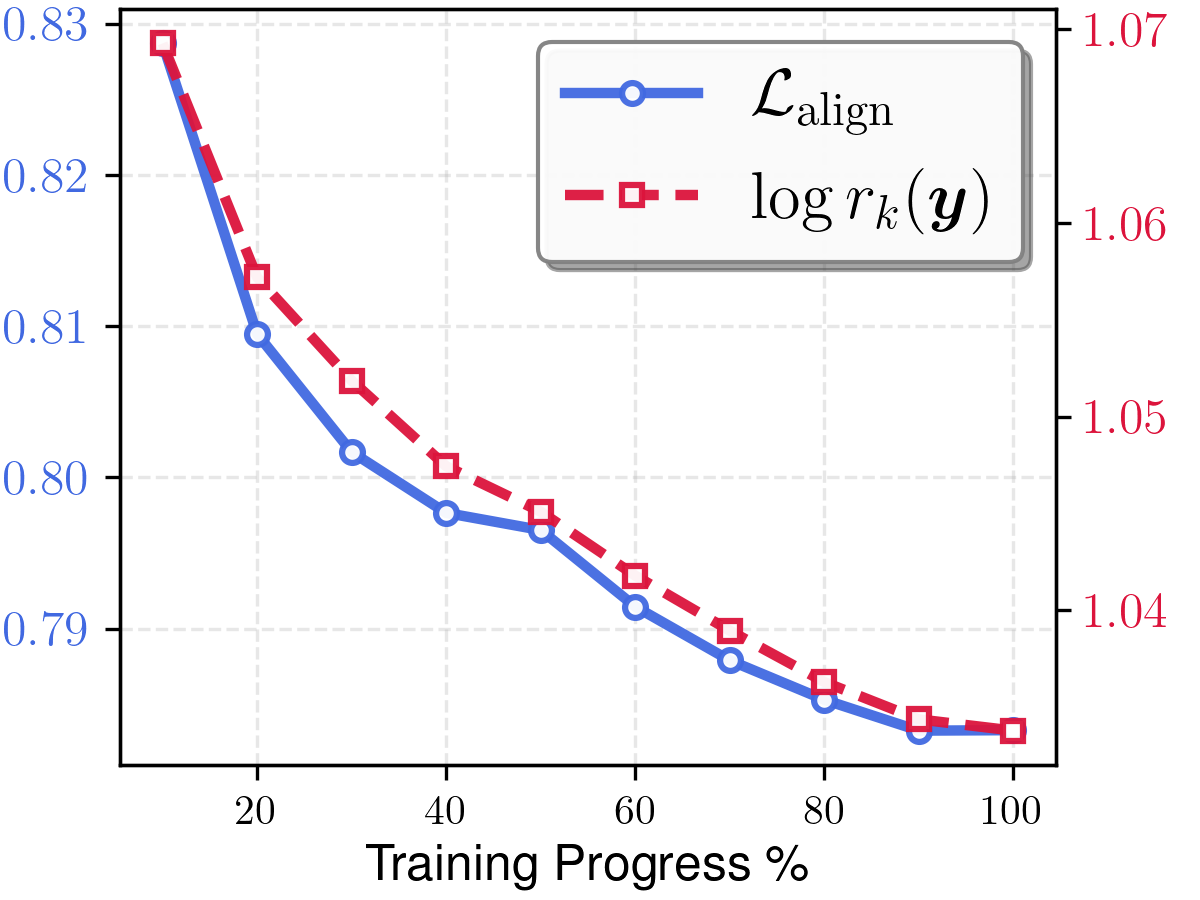}
    \end{minipage}
    \hfill
    \begin{minipage}{0.24\textwidth}
        \centering
        {\small DinoV2 (Semantic)}
        \includegraphics[width=\linewidth]{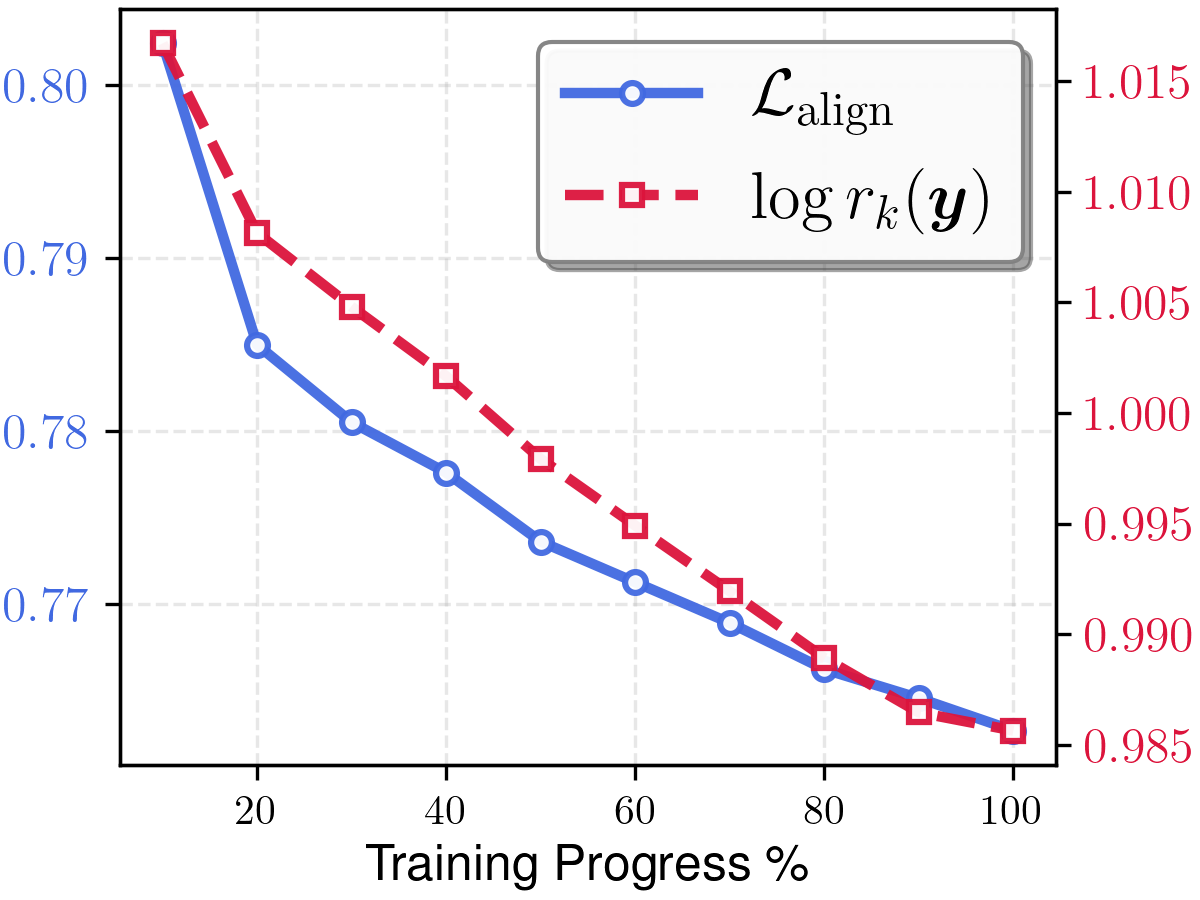}
    \end{minipage}
    \hfill
    \begin{minipage}{0.24\textwidth}
        \centering
        {\small VQ (Discrete)}
        \includegraphics[width=\linewidth]{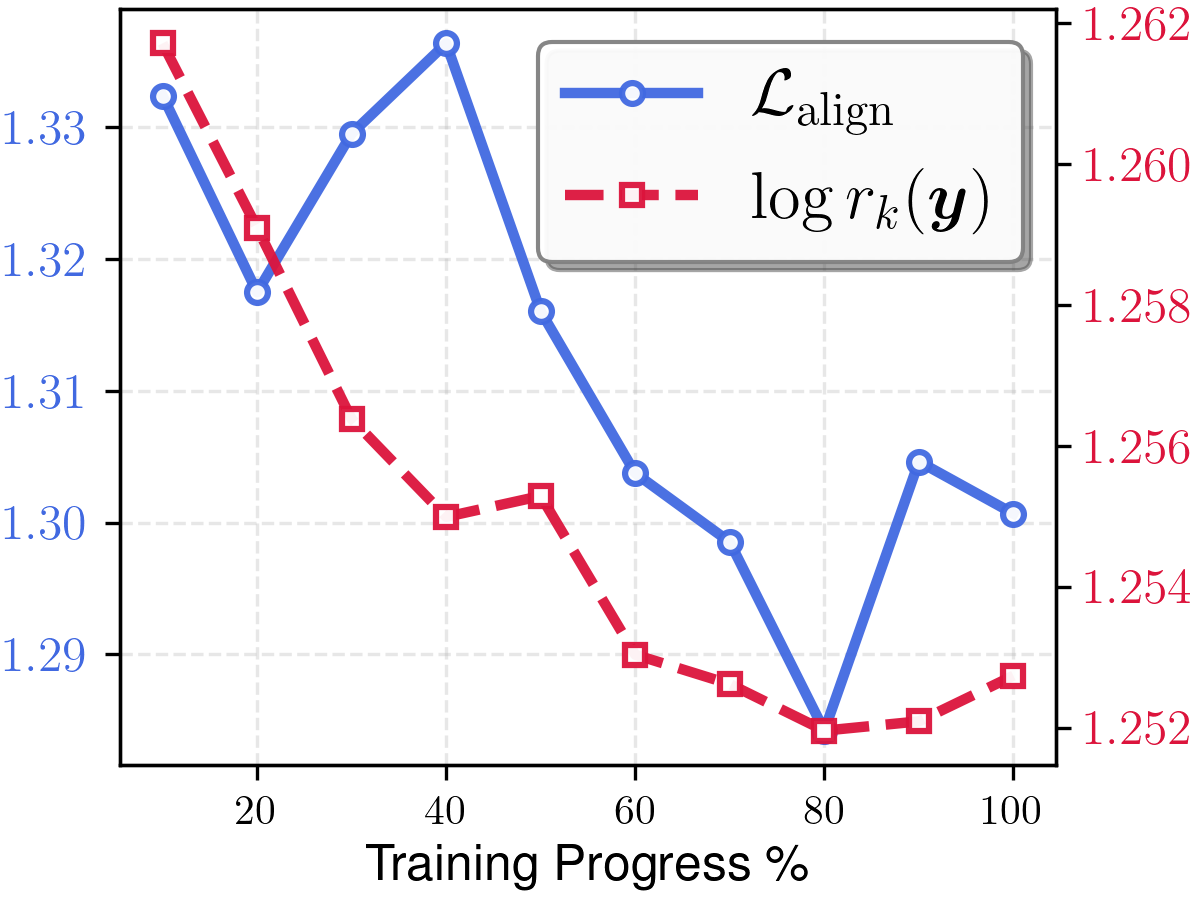}
    \end{minipage}
    \hfill
    \begin{minipage}{0.24\textwidth}
        \centering
        {\small Qwen (Textual)}
        \includegraphics[width=\linewidth]{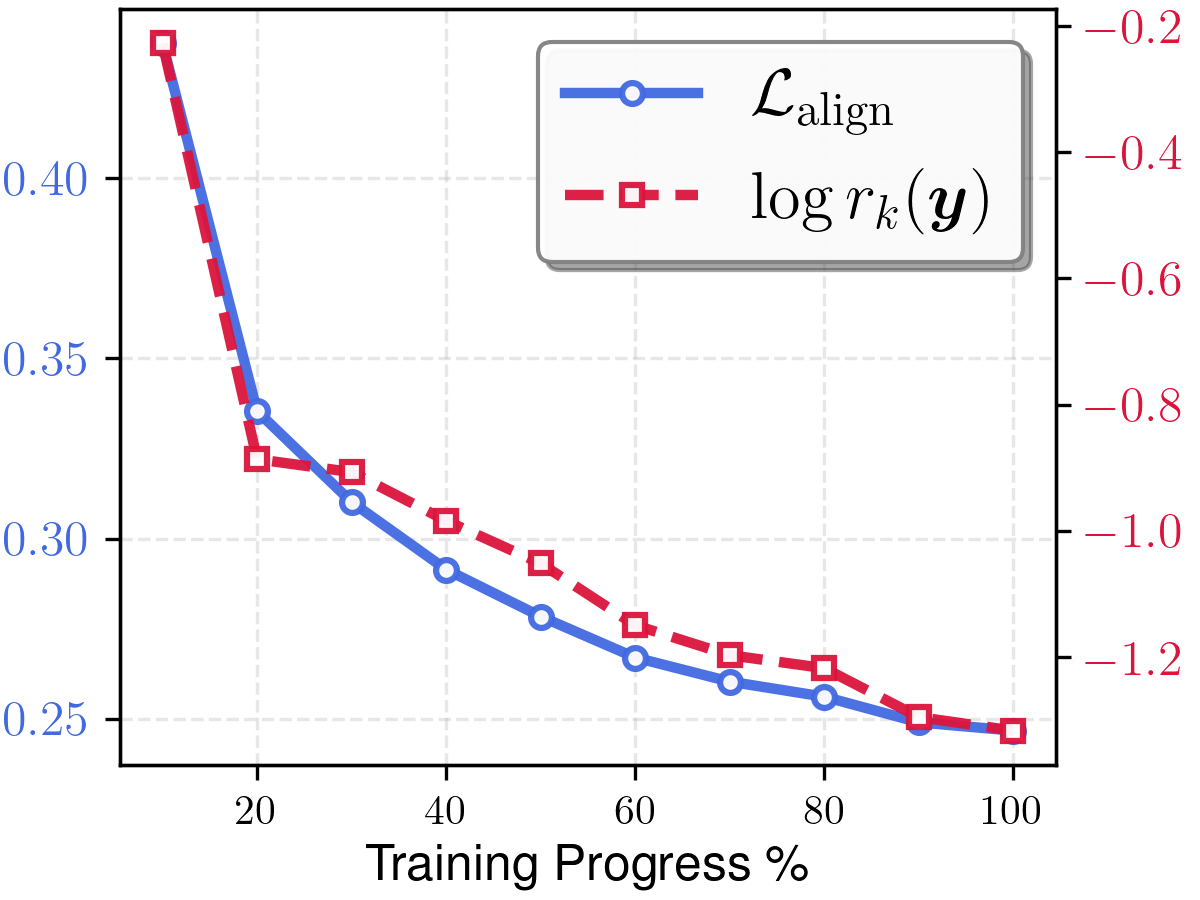}
    \end{minipage}
    \caption{Aligning autoencoders on ImageNet-1K with different target distributions. The alignment loss $\mathcal{L}_{\text{align}}$ (blue solid) and the $k$-NN distance $\log r_k(\bm{y})$ (red dashed) are proportional throughout the training. Confirming that $\mathcal{L}_{\text{align}}$ serves as a good proxy for the NLL of the latents under $p_{\text{data}}$.}
    \label{fig:alignment}
    \vspace{-1em}
\end{figure}
\paragraph{Generation Results}
After demonstrating effective latent space alignment, we investigated its impact on generative model performance. 
We evaluated both reconstruction and generation capabilities on ImageNet using the MAR-B~\cite{li2024autoregressive} architecture. 
For MAR-B, we incorporated qk-norm~\cite{dehghani2023scaling} and replaced the diffusion head with a flow head to ensure stable training. 
We choose flow-based MAR-B as it does not favor continuous Gaussian-like latent structure like Diffusion models~\cite{song2020denoising,dhariwal2021diffusion,Karras2022ElucidatingTD,Nichol2021ImprovedDD,rombach2022high} do.
To ensure an ``apple-to-apple'' comparison, configurations and hardware remained identical across all experiments, with the only difference being the specific AE used for each alignment variant.
\begin{table*}[htbp]
    \centering
    \footnotesize
    \setlength{\tabcolsep}{4pt}  
    \caption{ImageNet $256\times256$ conditional generation using MAR-B. All models are trained and evaluated using identical settings. The CFG scale is tuned for KL and kept the same for others.}
    \label{tab:imagenet}
    \begin{tabular}{l|cc|cccc|cccc}
        \toprule
         \multirow{2}{*}{Autoencoder} & \multirow{2}{*}{rFID$\downarrow$} & \multirow{2}{*}{PSNR$\uparrow$} & \multicolumn{4}{c|}{w/o CFG} & \multicolumn{4}{c}{w/ CFG} \\
         & & & FID$\downarrow$ & IS$\uparrow$ & Pre.$\uparrow$ & Rec.$\uparrow$ & FID$\downarrow$ & IS$\uparrow$ & Pre.$\uparrow$ & Rec.$\uparrow$ \\
        \midrule
    AE              & 1.13 & 20.20 & 15.08 & 86.37 & \textbf{0.60} & \textbf{0.59} & 5.26 & 237.60 & 0.56 & 0.65 \\
    KL              & 1.65 & 22.59 & 12.94 & 91.86 & 0.60 & 0.58 & 5.29 & 200.85 & 0.57 & 0.65 \\
    SoftVQ          & \textbf{0.61} & 23.00 & 13.30 & 93.40 & 0.60 & 0.57 & 6.09 & 198.53 & \textbf{0.58} & 0.61 \\
    \midrule
    Low-level (VAE) & 1.22 & 22.31 & 12.04 & 98.66 & 0.56 & 0.57 & 5.02 & 240.03 & 0.56 & 0.62 \\
    Semantic (Dino) & 1.26 & 23.07 & \textbf{11.47} & 101.74 & 0.59 & 0.59 & \textbf{4.87} & \textbf{250.38} & 0.54 & 0.67 \\
    Discrete (VQ)   & 2.99 & 22.32 & 24.63 & 48.17 & 0.55 & 0.53 & 10.04 & 119.64 & 0.47 & 0.65 \\
    Textual (Qwen)  & 0.85 & \textbf{23.12} & 11.89 & \textbf{102.23} & 0.55 & 0.57 & 6.56 & 262.89 & 0.49 & \textbf{0.69}  \\
    \bottomrule
    \end{tabular}
    \vspace{-1em}
\end{table*}

The results are presented in Tab.~\ref{tab:imagenet}. 
Reconstruction performance was measured by rFID~\cite{Heusel2017GANsTB} and PSNR on the ImageNet validation set. 
Generation performance was assessed using FID, IS~\cite{salimans2016improved}, Precision, and Recall on 50k generated samples and the validation set, both with and without classifier-free guidance (CFG)~\cite{ho2022classifier}. 
Our key findings are:

\textit{1) Alignment vs. Reconstruction Trade-off:} Latent space alignment typically degrades reconstruction quality (rFID, PSNR) compared to vanilla AEs, as constraints reduce capacity. SoftVQ\cite{chen2024softvq} excels among aligned methods due to its sample-level alignment.  \textit{2) Alignment Enhances Generation:} Structured latent spaces improve generative metrics (FID, IS), but complexity is not decisive. Simple features (text embeddings like Qwen) may match the performance of richer visual features (DinoV2).  \textit{3) Optimal prior selection is open:} No consensus exists on optimal priors. Low-dimensional discrete features (LlamaGen VQ) underperform, while cross-modal alignment (Qwen text embeddings) demonstrates transferable structural benefits. More discussions can be found in Appendix~\ref{app:exp}.

\subsection{Ablation Study}
\label{sec:ablation}

\begin{table}[t]
    \centering
    \caption{Ablation studies on ImageNet $256\times 256$ for different configurations using autencoders regularized by textual features (Qwen). We use a shorter training schedule when ablating weight.}
    \vspace{1mm}
    \footnotesize

    \begin{subtable}[t]{0.48\textwidth}
    \centering
    \caption{Downsampling Methods}
    \begin{tabular}{@{}lccccc@{}}
    \toprule
    Method & rFID$\downarrow$ & PSNR$\uparrow$ & FID$\downarrow$ & IS$\uparrow$ \\
    \midrule
    Random Proj. & \textbf{0.85} & 23.12 & \textbf{11.89} & \textbf{102.23} \\
    Avg. Pooling & 0.94 & 22.98 & 16.06 & 60.37 \\
    PCA & 0.87 & \textbf{23.14} & 14.95 & 83.59 \\
    \bottomrule
    \end{tabular}
    \end{subtable}
    \hfill
    \begin{subtable}[t]{0.48\textwidth}
    \centering
    \caption{Alignment Loss Weight}
    \begin{tabular}{@{}ccccc@{}}
    \toprule
    Weight $\lambda$ & rFID$\downarrow$ & PSNR$\uparrow$ & FID$\downarrow$ & IS$\uparrow$ \\
    \midrule
    0.001 & \textbf{0.89} & 22.78 & 17.57 & 75.20 \\
    0.005 & 1.02 & 22.98 & 16.93 & 78.01 \\
    0.01 & 1.31 & \textbf{23.12} & 13.67 & 82.13 \\
    0.05 & 1.81 & 21.82 & \textbf{12.30} & \textbf{92.48} \\
    \bottomrule
    \end{tabular}
    \end{subtable}
    \label{tab:ablation}
    \vspace{-1em}
\end{table}

\paragraph{Downsampling Operators} We ablate the downsampling operators in Tab.~\ref{tab:ablation} (a). We adopt the same settings as in Tab.~\ref{tab:imagenet} using the model with the textual embeddings (Qwen) as the target distribution. Despite all being linear downsampling operators, PCA and Avg. Pooling perform worse than Random Projection. We hypothesize that this is because unlike Random Projection which preserves the structure of the data, both PCA and Avg. Pooling are likely to destroy the structure. Avg. Pooling performs especially poorly since it merges close features that are independent from the location.

\paragraph{Alignment Loss Weight} 
We apply different strengths of regularization by altering the alignment loss weight $\lambda$ in Tab.~\ref{tab:ablation} (b). 
As expected, larger weight implies heavier regularization, worse reconstruction, and easier generation. 
However, heavier regularization limits the generation performance and may even cause the GAN loss to collapse.
A trade-off exists between reconstruction and generation when the capacity of the model is limited.


\section{Conclusion}
\label{sec:conclusion}

This paper introduced a novel method for aligning learnable latent spaces with arbitrary target distributions by leveraging pre-trained flow-based generative models as expressive priors.
Our approach utilizes a computationally tractable alignment loss, adapted from the flow matching objective, to guide latent variables towards the target distribution.
We theoretically established that minimizing this alignment loss serves as a proxy for maximizing a variational lower bound on the log-likelihood of the latents under the flow-defined target.
The effectiveness of our method is validated through empirical results, including controlled toy settings and large-scale ImageNet experiments.
Ultimately, this work provides a flexible and powerful framework for incorporating rich distributional priors, paving the way for more structured and interpretable representation learning.
A \textit{limitation}, and also a promising future direction, is that the selection of the optimal prior remains a challenge. While semantic priors are effective for image generation, we posit that no single ``silver bullet'' prior exists for all tasks; rather, the optimal choice is likely task-specific and need be further explored.

\bibliographystyle{plain}
\bibliography{references}

\newpage
\appendix
\section{Complete Proof}
\label{app:proof}

\subsection{Complete Proof for Proposition 1}
\label{app:proof_prop_1}

We restate Proposition 1 for clarity and self-containedness.
\begin{propone}{1}
\label{prop:propone}
Let $\bm{v}_\theta: \R^{d_1} \times [0,1] \to \R^{d_1}$ be a given velocity field, and $p_{\mathrm{init}}$ be a base distribution. For any $\bm{y} \in \R^{d_1}$, the log-likelihood $\log p_1^{\bm{v}_\theta}(\bm{y})$ of $\bm{y}$ under the distribution induced by flowing $p_{\mathrm{init}}$ with $\bm{v}_\theta$ from $t=0$ to $t=1$, is lower-bounded as:
\begin{equation}
    \label{eq:prop_elbo_statement_app}
\log p_1^{\bm{v}_\theta}(\bm{y}) \ge C(\bm{y}) -  \lambda \mathcal{L}_{\text{align}}(\bm{y}; \theta),
\end{equation}
where $\lambda > 0$ is a constant, $\mathcal{L}_{\text{align}}(\bm{y}; \theta)$ is defined as
\begin{equation}
    \label{eq:loss_y_app_ref}
    \mathcal{L}_{\text{align}}(\bm{y}; \theta) = \E_{s \sim \U[0, 1], \bm{x}_0 \sim p_{\mathrm{init}}(\bm{x}_0)} \left[ \| (\bm{y} - \bm{x}_0) - \bm{v}_\theta((1-s)\bm{x}_0 + s\bm{y}, s) \|^2 \right],
\end{equation}
and $C(\bm{y})$ is a term dependent on $\bm{y}$ and $\bm{v}_\theta$, given by
\begin{equation}
    \label{eq:C_y_app_ref}
    C(\bm{y}) = - \mathbb{E}_{s \sim \mathcal{U}[0,1], \bm{x}_0 \sim p_{\mathrm{init}}} \left[ \mathrm{Tr}(\nabla_{\bm{z}} \bm{v}_\theta((1-s)\bm{x}_0 + s\bm{y}, s)) \right].
\end{equation}
\end{propone}

To prove this, we make the following assumptions:
\begin{assumption}[Properties of the Velocity Field]
    \label{ass:v_props_app}
    The velocity field $\bm{v}_\theta: \R^{d_1} \times [0,1] \to \R^{d_1}$ is continuous in $t$ and Lipschitz continuous in its spatial argument with continuously differentiable components, ensuring that $\nabla_{\bm{z}} \bm{v}_\theta(\bm{z}, s)$ exists and is bounded on compact sets.
\end{assumption}
\begin{assumption}[Variational Path Choice]
    \label{ass:path_choice_app}
    The variational paths used to construct the ELBO are straight lines $\bm{z}_s(\bm{x}_0, \bm{y}) = (1-s)\bm{x}_0 + s\bm{y}$ for $s \in [0,1]$, originating from $\bm{x}_0 \sim p_{\mathrm{init}}$ and terminating at $\bm{y}$. The velocity of such a path is $\dot{\bm{z}}_s(\bm{x}_0, \bm{y}) = \bm{y} - \bm{x}_0$.
\end{assumption}
\begin{assumption}[Variational Distribution Choice]
    \label{ass:q_choice_app}
    The variational distribution over the initial states $\bm{x}_0$ conditioned on $\bm{y}$ is chosen as $q(\bm{x}_0 | \bm{y}) = p_{\mathrm{init}}(\bm{x}_0)$.
\end{assumption}
\begin{assumption}[Weighting Factor in ELBO]
    \label{ass:lambda_choice_app}
    The time-dependent weighting factor $\lambda_s$ in the general ELBO formulation (Eq.~\ref{eq:proof_elbo_form_general_app} below) is chosen as a positive constant $\lambda_s = \lambda > 0$ for all $s \in [0,1]$.
\end{assumption}

\begin{remark}[Optimality of $\bm{v}_\theta$]
\label{remark:optimal}
    Sec.~\ref{sec:proof} introduces Assumption~\ref{ass:vtheta_fixed}, which states that $\bm{v}_\theta$ is optimally pre-trained such that $\bm{v}_\theta((1-s)\bm{x}_0 + s\bm{x}_1, s) = \bm{x}_1 - \bm{x}_0$ for $\bm{x}_1 \sim p_{\mathrm{data}}$. This assumption is not required for the mathematical validity of the ELBO in Proposition 1 itself, which holds for any $\bm{v}_\theta$ satisfying Assumption~\ref{ass:v_props_app}. However, Assumption~\ref{ass:vtheta_fixed} is crucial for interpreting why minimizing $\mathcal{L}_{\text{align}}(\bm{y}; \theta)$ drives $\bm{y}$ towards $p_{\mathrm{data}}$, as it implies $\mathcal{L}_{\text{align}}(\bm{y}; \theta) \approx 0$ if $\bm{y} \sim p_{\mathrm{data}}$.
\end{remark}

\begin{proof}
The proof is based on a variational approach to lower-bound the log-likelihood in continuous-time generative models. This technique has been established in the literature for Neural ODEs and continuous normalizing flows \cite{chen2018neural,grathwohl2018ffjord,liu2022flow}. 

For a target point $\bm{y}$, we consider a family of paths $\bm{z}_s(\bm{x}_0, \bm{y})$ parameterized by initial states $\bm{x}_0$ drawn from a proposal distribution $q(\bm{x}_0|\bm{y})$, where each path starts at $\bm{x}_0$ and ends at $\bm{y}$ (i.e., $\bm{z}_0(\bm{x}_0, \bm{y}) = \bm{x}_0$ and $\bm{z}_1(\bm{x}_0, \bm{y}) = \bm{y}$). The variational lower bound is derived by considering the path integral formulation of the likelihood. For any such family of paths with velocities $\dot{\bm{z}}_s(\bm{x}_0, \bm{y})$, the bound takes the form:
\begin{align}
    \label{eq:proof_elbo_form_general_app}
    \log p_1^{\bm{v}_\theta}(\bm{y}) \ge \mathbb{E}_{\bm{x}_0 \sim q(\cdot|\bm{y})} \Bigg[ & \log p_{\mathrm{init}}(\bm{x}_0) - \log q(\bm{x}_0|\bm{y}) \nonumber \\
    & - \int_0^1 \lambda_s \| \dot{\bm{z}}_s(\bm{x}_0, \bm{y}) - \bm{v}_\theta(\bm{z}_s(\bm{x}_0, \bm{y}), s) \|^2 \mathrm{d}s \nonumber \\
    & - \int_0^1 \mathrm{Tr}(\nabla_{\bm{z}_s} \bm{v}_\theta(\bm{z}_s(\bm{x}_0, \bm{y}), s)) \mathrm{d}s \Bigg].
\end{align}

We now apply our specific assumptions:
\begin{itemize}
    \item By Assumption~\ref{ass:path_choice_app}, the paths are $\bm{z}_s(\bm{x}_0, \bm{y}) = (1-s)\bm{x}_0 + s\bm{y}$, and their velocities are $\dot{\bm{z}}_s(\bm{x}_0, \bm{y}) = \bm{y} - \bm{x}_0$.
    \item By Assumption~\ref{ass:q_choice_app}, $q(\bm{x}_0|\bm{y}) = p_{\mathrm{init}}(\bm{x}_0)$. This causes the term $\log p_{\mathrm{init}}(\bm{x}_0) - \log q(\bm{x}_0|\bm{y})$ to vanish.
    \item By Assumption~\ref{ass:lambda_choice_app}, we set $\lambda_s = \lambda$, a positive constant for all $s \in [0,1]$.
\end{itemize}
Substituting these into Eq.~\ref{eq:proof_elbo_form_general_app}:
\begin{align}
\log p_1^{\bm{v}_\theta}(\bm{y}) \ge \mathbb{E}_{\bm{x}_0 \sim p_{\mathrm{init}}} \Bigg[ & - \int_0^1 \lambda \| (\bm{y} - \bm{x}_0) - \bm{v}_\theta((1-s)\bm{x}_0 + s\bm{y}, s) \|^2 \mathrm{d}s \nonumber \\
& - \int_0^1 \mathrm{Tr}(\nabla_{\bm{z}} \bm{v}_\theta((1-s)\bm{x}_0 + s\bm{y}, s)) \mathrm{d}s \Bigg]. \label{eq:proof_elbo_substituted_app}
\end{align}
Using the equivalence $\int_0^1 f(s) \mathrm{d}s = \mathbb{E}_{s \sim \mathcal{U}[0,1]}[f(s)]$ for integrable functions $f$, we can rewrite each term. The first term becomes:
\begin{align*}
    \mathbb{E}_{\bm{x}_0 \sim p_{\mathrm{init}}} \left[ - \lambda \int_0^1 \| (\bm{y} - \bm{x}_0) - \bm{v}_\theta((1-s)\bm{x}_0 + s\bm{y}, s) \|^2 \mathrm{d}s \right] \\
    = - \lambda \mathcal{L}_{\text{align}}(\bm{y}; \theta),
\end{align*}
using the definition of $\mathcal{L}_{\text{align}}(\bm{y}; \theta)$ from Eq.~\ref{eq:loss_y_app_ref}.
The second term becomes:
\begin{align*}
    \mathbb{E}_{\bm{x}_0 \sim p_{\mathrm{init}}} \left[ - \int_0^1 \mathrm{Tr}(\nabla_{\bm{z}} \bm{v}_\theta((1-s)\bm{x}_0 + s\bm{y}, s)) \mathrm{d}s \right] = C(\bm{y}),
\end{align*}
using the definition of $C(\bm{y})$ from Eq.~\ref{eq:C_y_app_ref}.
Combining these, the ELBO becomes:
\begin{equation}
\label{eq:proof_elbo_final_app}
\log p_1^{\bm{v}_\theta}(\bm{y}) \ge C(\bm{y}) - \lambda \mathcal{L}_{\text{align}}(\bm{y}; \theta).
\end{equation}
This completes the proof of Proposition 1. Our paper uses $\lambda=1$ for simplicity, yielding the bound $C(\bm{y}) - \mathcal{L}_{\text{align}}(\bm{y}; \theta)$.
\end{proof}

\subsection{Rigorous Analysis of \texorpdfstring{$C(\bm{y})$}{C(y)}}
\label{app:discussions_Cy_revised}

We now provide a rigorous analysis of the term $C(\bm{y})$ in the ELBO and establish conditions under which minimizing $\mathcal{L}_{\text{align}}(\bm{y}; \theta)$ leads to favorable behavior of $C(\bm{y})$.

\subsubsection{Geometric Interpretation of \texorpdfstring{$C(\bm{y})$}{C(y)}}

The term $C(\bm{y})$ represents the negative expected divergence of the velocity field $\bm{v}_\theta$ along straight-line variational paths from $\bm{x}_0 \sim p_{\mathrm{init}}$ to $\bm{y}$:
\begin{equation}
    \label{eq:Cy_def_appendix_rigorous}
    C(\bm{y}) = - \mathbb{E}_{s \sim \mathcal{U}[0,1], \bm{x}_0 \sim p_{\mathrm{init}}} \left[ \mathrm{Tr}(\nabla_{\bm{z}} \bm{v}_\theta((1-s)\bm{x}_0 + s\bm{y}, s)) \right].
\end{equation}

To understand its role, recall that in the exact likelihood computation for a flow model, we have:
\begin{equation}
\label{eq:exact_likelihood_flow}
\log p_1^{\bm{v}_\theta}(\bm{y}) = \log p_{\mathrm{init}}(\bm{x}_0^*(\bm{y})) - \int_0^1 \mathrm{Tr}(\nabla_{\bm{x}} \bm{v}_\theta(\bm{x}_s^*(\bm{y}), s)) \mathrm{d}s,
\end{equation}
where $\bm{x}_s^*(\bm{y})$ is the unique ODE trajectory satisfying $\dot{\bm{x}}_s^* = \bm{v}_\theta(\bm{x}_s^*, s)$ with $\bm{x}_1^*(\bm{y}) = \bm{y}$. The divergence integral measures the logarithmic volume change induced by the flow.

Our variational bound approximates this exact computation by averaging over straight-line paths rather than the true ODE trajectory. The quality of this approximation depends on how well the straight paths approximate the true flow geometry.

\subsubsection{Relationship Between \texorpdfstring{$C(\bm{y})$}{C(y)} and Distributional Alignment}

We establish the key relationship between $C(\bm{y})$ and the alignment quality measured by $\mathcal{L}_{\text{align}}(\bm{y}; \theta)$.

\begin{lemma}[Consistency of Variational Paths]
\label{lemma:path_consistency}
Under Assumption~\ref{ass:vtheta_fixed} (optimal $\bm{v}_\theta$), for $\bm{y} \sim p_{\mathrm{data}}$, the straight-line variational paths $\bm{z}_s = (1-s)\bm{x}_0 + s\bm{y}$ satisfy:
\begin{equation}
\mathbb{E}_{\bm{x}_0 \sim p_{\mathrm{init}}} \left[ \| (\bm{y} - \bm{x}_0) - \bm{v}_\theta(\bm{z}_s, s) \|^2 \right] = 0 \quad \forall s \in [0,1].
\end{equation}
Consequently, $\mathcal{L}_{\text{align}}(\bm{y}; \theta) = 0$ when $\bm{y} \sim p_{\mathrm{data}}$.
\end{lemma}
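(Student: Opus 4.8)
The plan is to derive Lemma~\ref{lemma:path_consistency} as an essentially immediate consequence of the optimality hypothesis, Assumption~\ref{ass:vtheta_fixed}. The single observation that does the work is that a sample $\bm{y}\sim p_{\mathrm{data}}$ is itself an admissible right endpoint in the quantifier of that assumption: setting $\bm{x}_1 := \bm{y}$, the assumption asserts $\bm{v}_\theta((1-s)\bm{x}_0 + s\bm{y}, s) = \bm{y} - \bm{x}_0$ for every $\bm{x}_0$ in the support of $p_{\mathrm{init}}$ and every $s\in[0,1]$. Substituting $\bm{z}_s = (1-s)\bm{x}_0 + s\bm{y}$, the residual $(\bm{y}-\bm{x}_0) - \bm{v}_\theta(\bm{z}_s, s)$ is therefore the zero vector pointwise, so its squared Euclidean norm vanishes identically in $(\bm{x}_0, s)$.

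First I would fix an arbitrary $\bm{y}$ in $\supp(p_{\mathrm{data}})$ and, for each $s \in [0,1]$, integrate the identically-zero integrand against $p_{\mathrm{init}}$ to conclude $\mathbb{E}_{\bm{x}_0\sim p_{\mathrm{init}}}\big[\|(\bm{y}-\bm{x}_0) - \bm{v}_\theta(\bm{z}_s, s)\|^2\big] = 0$, which is the first displayed claim. Then I would take the further expectation over $s\sim\mathcal{U}[0,1]$ (equivalently, integrate over $[0,1]$): the expectation of the zero function is zero, giving $\mathcal{L}_{\text{align}}(\bm{y};\theta) = 0$ for every such $\bm{y}$, and hence also after the outer expectation over $\bm{y}\sim p_{\mathrm{data}}$. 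No integrability or interchange-of-limits issue arises because every integrand is identically zero; Assumption~\ref{ass:v_props_app} merely ensures the quantities in question are well defined on the compact sets swept out by the straight paths.

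The only point deserving comment, rather than a genuine obstacle, is that Assumption~\ref{ass:vtheta_fixed} is an idealization: the exact minimizer of the flow-matching loss Eq.~\eqref{eq:loss} is the conditional expectation $\bm{v}_\theta(\bm{z}, s) = \mathbb{E}[\bm{x}_1 - \bm{x}_0 \mid (1-s)\bm{x}_0 + s\bm{x}_1 = \bm{z}]$, which coincides with $\bm{x}_1 - \bm{x}_0$ for a particular pair only when the straight-line interpolants passing through $(\bm{z}, s)$ do not cross. I would therefore state the lemma under Assumption~\ref{ass:vtheta_fixed} exactly as written, and add one sentence noting that, absent this idealization, the conclusion degrades gracefully: $\mathcal{L}_{\text{align}}(\bm{y};\theta)$ is then small rather than exactly zero, controlled by the conditional variance of $\bm{x}_1 - \bm{x}_0$ given the crossing point. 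That quantitative refinement is not needed for the subsequent analysis of $C(\bm{y})$, so I would not pursue it here.
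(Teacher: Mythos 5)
Your argument is correct and matches the paper's proof: both apply Assumption~\ref{ass:vtheta_fixed} directly with $\bm{x}_1 := \bm{y}$, conclude the residual $(\bm{y}-\bm{x}_0)-\bm{v}_\theta(\bm{z}_s,s)$ vanishes pointwise in $(\bm{x}_0,s)$, and take expectations. Your closing remark about the conditional-expectation minimizer and path crossings is a genuine subtlety the paper glosses over, but it does not change the proof under the stated idealization.
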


\begin{proof}
By Assumption~\ref{ass:vtheta_fixed}, for $\bm{y} \sim p_{\mathrm{data}}$ and $\bm{x}_0 \sim p_{\mathrm{init}}$, we have:
$$\bm{v}_\theta((1-s)\bm{x}_0 + s\bm{y}, s) = \bm{y} - \bm{x}_0.$$
Therefore, $\| (\bm{y} - \bm{x}_0) - \bm{v}_\theta((1-s)\bm{x}_0 + s\bm{y}, s) \|^2 = 0$ for all $\bm{x}_0$ and $s$, which implies the result.
\end{proof}

\begin{theorem}[Monotonic Behavior of the ELBO]
\label{thm:elbo_monotonic}
Consider two points $\bm{y}_1, \bm{y}_2 \in \R^{d_1}$ such that $\mathcal{L}_{\text{align}}(\bm{y}_1; \theta) > \mathcal{L}_{\text{align}}(\bm{y}_2; \theta)$. If the velocity field $\bm{v}_\theta$ is $L$-Lipschitz in its spatial argument and satisfies Assumption~\ref{ass:vtheta_fixed}, then:
\begin{equation}
\label{eq:C_difference_bound}
|C(\bm{y}_1) - C(\bm{y}_2)| \le L \cdot d_1 \cdot \frac{1}{2}\|\bm{y}_1 - \bm{y}_2\|,
\end{equation}
where the factor $\frac{1}{2}$ comes from $\mathbb{E}_{s \sim \mathcal{U}[0,1]}[s] = \frac{1}{2}$.
Moreover, if $\mathcal{L}_{\text{align}}(\bm{y}_1; \theta) - \mathcal{L}_{\text{align}}(\bm{y}_2; \theta) > \frac{L \cdot d_1}{2} \cdot \|\bm{y}_1 - \bm{y}_2\|$, then:
\begin{equation}
\log p_1^{\bm{v}_\theta}(\bm{y}_2) - \log p_1^{\bm{v}_\theta}(\bm{y}_1) > 0.
\end{equation}
\end{theorem}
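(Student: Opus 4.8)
The plan is to prove the two displays in turn: \eqref{eq:C_difference_bound} is a Lipschitz estimate on the Jacobian--trace functional $C(\cdot)$, and the log-likelihood ordering then follows by feeding that estimate into the ELBO of Proposition~1. Throughout, abbreviate $\mathrm{ELBO}(\bm y) := C(\bm y) - \mathcal{L}_{\text{align}}(\bm y;\theta)$ (the right-hand side of \eqref{eq:prop_elbo_statement_app} with $\lambda=1$) and $\bm z_s(\bm x_0,\bm y):=(1-s)\bm x_0 + s\bm y$. For the first display, I would subtract the two instances of the definition \eqref{eq:C_y_app_ref} to express $C(\bm y_1)-C(\bm y_2)$ as $\E_{s,\bm x_0}$ of a difference of traces, then bound the integrand pointwise. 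The elementary facts are: $|\mathrm{Tr}(M)| = |\sum_i e_i^\top M e_i| \le d_1\|M\|_{\mathrm{op}}$ for any $d_1\times d_1$ matrix $M$; the Jacobian $\nabla_{\bm z}\bm v_\theta(\cdot,s)$ is $L$-Lipschitz (this is the operative reading of the hypothesis — the estimate genuinely needs Lipschitzness of the \emph{Jacobian}, available globally or, via Assumption~\ref{ass:v_props_app}, locally on a compact set containing the straight segments $\bm z_s(\bm x_0,\cdot)$); and $\bm z_s(\bm x_0,\bm y_1)-\bm z_s(\bm x_0,\bm y_2) = s(\bm y_1-\bm y_2)$. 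Chaining these, the integrand is at most $d_1 L\,s\,\|\bm y_1-\bm y_2\|$, and taking the expectation over $\bm x_0\sim p_{\mathrm{init}}$ and $s\sim\U[0,1]$ — using $\E[s]=\tfrac12$ — yields $|C(\bm y_1)-C(\bm y_2)|\le \tfrac{L d_1}{2}\|\bm y_1-\bm y_2\|$.

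For the ordering, I would decompose
\begin{align*}
\mathrm{ELBO}(\bm y_2) - \mathrm{ELBO}(\bm y_1)
&= \big(C(\bm y_2)-C(\bm y_1)\big) + \big(\mathcal{L}_{\text{align}}(\bm y_1;\theta)-\mathcal{L}_{\text{align}}(\bm y_2;\theta)\big)\\
&\ge -\tfrac{L d_1}{2}\|\bm y_1-\bm y_2\| + \big(\mathcal{L}_{\text{align}}(\bm y_1;\theta)-\mathcal{L}_{\text{align}}(\bm y_2;\theta)\big) \;>\; 0,
\end{align*}
where the first inequality uses the bound just proved and the second is exactly the theorem's hypothesis. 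Proposition~1 then gives $\log p_1^{\bm v_\theta}(\bm y_2) \ge \mathrm{ELBO}(\bm y_2) > \mathrm{ELBO}(\bm y_1)$. The last link, from $\mathrm{ELBO}(\bm y_1)$ up to $\log p_1^{\bm v_\theta}(\bm y_1)$ — which is the \emph{wrong} direction in Proposition~1 — I would close using Assumption~\ref{ass:vtheta_fixed}: along the straight variational path to $\bm y_1$ one has $\dot{\bm z}_s = \bm y_1-\bm x_0 = \bm v_\theta(\bm z_s,s)$, so this path is an integral curve of $\bm v_\theta$, the instantaneous change-of-variables identity \eqref{eq:exact_likelihood_flow} holds along it exactly, and $\mathcal{L}_{\text{align}}(\bm y_1;\theta)=0$ by Lemma~\ref{lemma:path_consistency}; hence the variational bound collapses to the equality $\log p_1^{\bm v_\theta}(\bm y_1)=\mathrm{ELBO}(\bm y_1)$, and the desired strict inequality $\log p_1^{\bm v_\theta}(\bm y_2) > \log p_1^{\bm v_\theta}(\bm y_1)$ follows.

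The main obstacle is precisely this final passage. The ELBO of Proposition~1 is one-sided, so merely increasing the surrogate $\mathrm{ELBO}(\cdot)$ does not order the true log-likelihoods without control of the gap $\log p_1^{\bm v_\theta}(\bm y_1) - \mathrm{ELBO}(\bm y_1)$. Assumption~\ref{ass:vtheta_fixed} supplies this gap for free exactly where $\mathcal{L}_{\text{align}}$ vanishes — i.e., near $p_{\mathrm{data}}$, the regime the theorem is about — but away from that regime one would instead need a complementary \emph{upper} bound on $\log p_1^{\bm v_\theta}$ in a neighborhood of $\bm y_1$, and quantifying that is where I expect the genuine difficulty to sit. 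If one instead reads the claim at the level of the tractable surrogate (as the paper's narrative does), the first two steps already establish $\mathrm{ELBO}(\bm y_2) > \mathrm{ELBO}(\bm y_1)$ unconditionally and this issue does not arise. A secondary, routine point is discharging the Lipschitz-of-$\nabla_{\bm z}\bm v_\theta$ reading used in the first step (equivalently, strengthening Assumption~\ref{ass:v_props_app} to $\bm v_\theta\in C^{1,1}$ with constant $L$, or restricting to a compact working set).
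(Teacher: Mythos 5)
Your proof of the first display matches the paper's almost line-for-line: subtract the two instances of Eq.~\eqref{eq:C_y_app_ref}, bound the trace difference by $d_1\cdot\|\nabla_{\bm z}\bm v_\theta(\bm z_1,s)-\nabla_{\bm z}\bm v_\theta(\bm z_2,s)\|_{\mathrm{op}}$, use Lipschitzness of the Jacobian to get $\le L\|\bm z_1-\bm z_2\|$, observe $\|\bm z_1-\bm z_2\|=s\|\bm y_1-\bm y_2\|$, and integrate $\E[s]=\tfrac12$. Your side remark that the operative hypothesis must be Lipschitzness of the \emph{Jacobian} $\nabla_{\bm z}\bm v_\theta$ (i.e.\ $\bm v_\theta\in C^{1,1}$), not merely of $\bm v_\theta$, is a correct and useful observation: the paper's proof states the latter but invokes the former in the line $\|\nabla_{\bm z}\bm v_\theta(\bm z_1,s)-\nabla_{\bm z}\bm v_\theta(\bm z_2,s)\|_{\mathrm{op}}\le L\|\bm z_1-\bm z_2\|$, so you identified a (benign but real) mismatch between the stated hypothesis and the one actually used.

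For the second display you flagged the right problem, and you should know that the paper's own proof does not resolve it. The paper writes
\begin{equation*}
\log p_1^{\bm v_\theta}(\bm y_2)-\log p_1^{\bm v_\theta}(\bm y_1)\;\ge\;\mathrm{ELBO}(\bm y_2)-\mathrm{ELBO}(\bm y_1),
\end{equation*}
which is precisely the step you identified as invalid: from $a_i\ge b_i$ one cannot conclude $a_2-a_1\ge b_2-b_1$. To make the paper's deduction valid one needs $\log p_1^{\bm v_\theta}(\bm y_1)\le \mathrm{ELBO}(\bm y_1)$, i.e.\ equality in Proposition~1 at $\bm y_1$. Your proposed patch — invoke Assumption~\ref{ass:vtheta_fixed} and Lemma~\ref{lemma:path_consistency} to collapse the bound to equality at $\bm y_1$ — does not survive scrutiny here, and you partially sense this yourself: those results give $\mathcal{L}_{\text{align}}(\bm y_1;\theta)=0$ only when $\bm y_1\sim p_{\mathrm{data}}$, whereas the theorem's hypothesis $\mathcal{L}_{\text{align}}(\bm y_1;\theta)>\mathcal{L}_{\text{align}}(\bm y_2;\theta)\ge 0$ forces $\mathcal{L}_{\text{align}}(\bm y_1;\theta)>0$, i.e.\ $\bm y_1\notin\supp p_{\mathrm{data}}$, and the straight variational path to $\bm y_1$ is then not an integral curve of $\bm v_\theta$. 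So the patch is inapplicable in exactly the regime the hypothesis describes. What your first two steps do establish, unconditionally, is the ordering $\mathrm{ELBO}(\bm y_2)>\mathrm{ELBO}(\bm y_1)$, which is the surrogate-level reading you mention at the end; promoting that to an ordering of the true log-likelihoods would require a complementary upper bound on $\log p_1^{\bm v_\theta}(\bm y_1)$ (or a quantitative bound on the ELBO gap at $\bm y_1$), and neither you nor the paper supply one.
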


\begin{proof}
From the definition of $C(\bm{y})$ in Eq.~\ref{eq:C_y_app_ref}:
\begin{equation}
    \begin{aligned}
    C(\bm{y}_1) - C(\bm{y}_2) &= -\mathbb{E}_{s,\bm{x}_0} \left[ \mathrm{Tr}(\nabla_{\bm{z}} \bm{v}_\theta((1-s)\bm{x}_0 + s\bm{y}_1, s)) \right] \\
    &\quad + \mathbb{E}_{s,\bm{x}_0} \left[ \mathrm{Tr}(\nabla_{\bm{z}} \bm{v}_\theta((1-s)\bm{x}_0 + s\bm{y}_2, s)) \right].
    \end{aligned}
\end{equation}
By the Lipschitz continuity of $\bm{v}_\theta$, its Jacobian $\nabla_{\bm{z}} \bm{v}_\theta(\bm{z}, s)$ has bounded operator norm $\|\nabla_{\bm{z}} \bm{v}_\theta(\bm{z}, s)\|_{op} \le L$. Therefore:
\begin{equation}
    \begin{aligned}
    |\mathrm{Tr}(\nabla_{\bm{z}} \bm{v}_\theta(\bm{z}_1, s)) - \mathrm{Tr}(\nabla_{\bm{z}} \bm{v}_\theta(\bm{z}_2, s))| &\le d_1 \cdot \|\nabla_{\bm{z}} \bm{v}_\theta(\bm{z}_1, s) - \nabla_{\bm{z}} \bm{v}_\theta(\bm{z}_2, s)\|_{op} \\
    &\le d_1 \cdot L \cdot \|\bm{z}_1 - \bm{z}_2\|.
    \end{aligned}
\end{equation}
Setting $\bm{z}_1 = (1-s)\bm{x}_0 + s\bm{y}_1$ and $\bm{z}_2 = (1-s)\bm{x}_0 + s\bm{y}_2$, we get $\|\bm{z}_1 - \bm{z}_2\| = s\|\bm{y}_1 - \bm{y}_2\|$. Taking expectations yields Eq.~\ref{eq:C_difference_bound}.

For the second part, using the ELBO bound from Proposition~\ref{prop:propone}:
\begin{equation}
    \begin{aligned}
    \log p_1^{\bm{v}_\theta}(\bm{y}_2) - \log p_1^{\bm{v}_\theta}(\bm{y}_1) &\ge [C(\bm{y}_2) - \mathcal{L}_{\text{align}}(\bm{y}_2; \theta)] - [C(\bm{y}_1) - \mathcal{L}_{\text{align}}(\bm{y}_1; \theta)] \\
    &= [C(\bm{y}_2) - C(\bm{y}_1)] + [\mathcal{L}_{\text{align}}(\bm{y}_1; \theta) - \mathcal{L}_{\text{align}}(\bm{y}_2; \theta)].
    \end{aligned}
\end{equation}
Using the bound on $|C(\bm{y}_1) - C(\bm{y}_2)|$ and the condition on $\mathcal{L}_{\text{align}}(\bm{y}_1; \theta) - \mathcal{L}_{\text{align}}(\bm{y}_2; \theta)$, the result follows.
\end{proof}

\subsubsection{Analysis of the Idealized Case}

We address the mathematical singularity that arises in the idealized rectified flow case where $\bm{v}_\theta$ has the exact form $\bm{v}_\theta(\bm{z}, s) = \bm{z}/s$ for $s > 0$.

\begin{proposition}[Regularization by Neural Network Parameterization]
\label{prop:regularization_effect}
Let $\bm{v}_\theta$ be parameterized by a neural network with bounded weights. Then there exists a constant $M > 0$ such that:
\begin{equation}
\left|\mathrm{Tr}(\nabla_{\bm{z}} \bm{v}_\theta(\bm{z}, s))\right| \le M \quad \forall \bm{z} \in \text{compact sets}, \, s \in [\epsilon, 1]
\end{equation}
for any $\epsilon > 0$. Consequently, $C(\bm{y})$ is well-defined and finite.
\end{proposition}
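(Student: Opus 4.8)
The plan is to exploit the structural form of $\bm{v}_\theta$ as a finite composition of affine layers and smooth activations, from which a uniform bound on the Jacobian follows by the chain rule. Concretely, write $\bm{v}_\theta(\cdot, s) = A_L(s) \circ \sigma \circ A_{L-1}(s) \circ \cdots \circ \sigma \circ A_1(s)$, where each $A_\ell(s): \bm{h} \mapsto \bm{W}_\ell \bm{h} + \bm{b}_\ell(s)$ is affine (with the adaLN time modulation folded into the biases and into diagonal scale factors), and $\sigma$ is the activation, assumed $C^1$ with $\sup|\sigma'| \le S < \infty$ (true for smooth activations such as GELU/SiLU; for ReLU-type nonlinearities one works with the a.e. derivative or a mollified version). ``Bounded weights'' means $\|\bm{W}_\ell\|_{\mathrm{op}} \le B$ for all $\ell$, and the adaLN modulation factors are bounded by some $R$ on $s \in [0,1]$ since they are continuous functions of the (bounded) time embedding.

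First I would differentiate through the composition: by the chain rule, $\nabla_{\bm{z}} \bm{v}_\theta(\bm{z}, s) = \bm{W}_L \bm{D}_{L-1}(\bm{z},s) \bm{W}_{L-1} \cdots \bm{D}_1(\bm{z},s) \bm{W}_1$, where each $\bm{D}_\ell(\bm{z},s)$ is a diagonal matrix whose entries are products of an activation derivative $\sigma'(\cdot)$ and a modulation factor, hence $\|\bm{D}_\ell(\bm{z},s)\|_{\mathrm{op}} \le SR$. Submultiplicativity of the operator norm then gives $\|\nabla_{\bm{z}} \bm{v}_\theta(\bm{z}, s)\|_{\mathrm{op}} \le B^L (SR)^{L-1} =: L_0$, a finite constant independent of $\bm{z}$ and of $s$. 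Second, I would invoke the elementary inequality $|\mathrm{Tr}(\bm{A})| \le d_1 \|\bm{A}\|_{\mathrm{op}}$ for any $\bm{A} \in \R^{d_1 \times d_1}$ (each diagonal entry is bounded in modulus by $\|\bm{A}\|_{\mathrm{op}}$, and there are $d_1$ of them), yielding $|\mathrm{Tr}(\nabla_{\bm{z}} \bm{v}_\theta(\bm{z}, s))| \le d_1 L_0 =: M$ uniformly over all $\bm{z} \in \R^{d_1}$ and $s \in [0,1]$. This is in fact stronger than the stated bound, which restricts $\bm{z}$ to compact sets and $s$ to $[\epsilon, 1]$; the weaker phrasing is the safe one if one does not wish to assume the time embedding is regular down to $s = 0$, in which case the same argument still gives the bound on $K \times [\epsilon, 1]$ via continuity of $\nabla_{\bm{z}} \bm{v}_\theta$ and compactness (a continuous function on a compact set attains its supremum).

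For the final clause, that $C(\bm{y})$ is well-defined and finite, I would note that $(s, \bm{x}_0) \mapsto \mathrm{Tr}(\nabla_{\bm{z}} \bm{v}_\theta((1-s)\bm{x}_0 + s\bm{y}, s))$ is continuous, hence Borel measurable, and by the bound above its modulus is $\le M$ everywhere. Since $\U[0,1] \otimes p_{\mathrm{init}}$ is a probability measure, the integral defining $C(\bm{y})$ in Eq.~\eqref{eq:C_y_app_ref} is the expectation of a bounded measurable function, so it exists and satisfies $|C(\bm{y})| \le M$; in particular it is finite for every $\bm{y} \in \R^{d_1}$. (If one insists on only the compact-set version, one instead splits the $\bm{x}_0$-integral over $\{\|\bm{x}_0\| \le \rho\}$ and its complement and controls the tail using the Gaussian moments of $p_{\mathrm{init}}$ together with the at-most-linear growth of $\bm{v}_\theta$, which already forces a global Jacobian bound.) The main obstacle I anticipate is not any deep estimate but pinning down the hypothesis precisely: one must be explicit that ``neural network with bounded weights'' also entails bounded, smooth-enough time modulation, and must handle the non-differentiability of ReLU-type activations (either by restricting to $C^1$ activations, by mollification, or by working with Clarke subgradients), since otherwise $\nabla_{\bm{z}} \bm{v}_\theta$ need not exist pointwise and the trace is only defined almost everywhere. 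Once the network class is fixed, the remainder is the routine chain-rule-plus-submultiplicativity computation sketched above, contrasted explicitly with the singular field $\bm{v}_\theta(\bm{z}, s) = \bm{z}/s$, whose Jacobian trace $d_1/s$ is unbounded as $s \to 0^+$ and therefore cannot be represented by such a network.
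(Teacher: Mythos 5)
Your argument follows the same route as the paper's one-line proof---bounded weights give a Lipschitz network with bounded Jacobian, hence a bounded divergence $\mathrm{Tr}(\nabla_{\bm{z}}\bm{v}_\theta)$, so $C(\bm{y})$ is a bounded expectation---but you unpack the chain-rule-plus-submultiplicativity computation that the paper merely gestures at, and you correctly establish the stronger \emph{global} bound rather than one restricted to compact sets. That strengthening actually matters: since $p_{\mathrm{init}}$ is Gaussian with unbounded support, the paper's ``bounded on compact sets'' phrasing leaves a small gap in deducing finiteness of $C(\bm{y})$, which you close either by the global bound or (as you note parenthetically) by a tail estimate using Gaussian moments and the at-most-linear growth of the network; so your write-up is correct, more careful, and in the same spirit as the paper's.
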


\begin{proof}
Neural networks with bounded parameters have Lipschitz continuous components. The Jacobian $\nabla_{\bm{z}} \bm{v}_\theta(\bm{z}, s)$ inherits this boundedness on compact sets, preventing the $1/s$ singularity from occurring exactly. The trace is therefore bounded, ensuring $C(\bm{y})$ remains finite.
\end{proof}

\subsubsection{Practical Implications and Optimization Strategy}

Our analysis establishes that:

\textbf{1. Consistency Principle:} When $\bm{y} \sim p_{\mathrm{data}}$, both $\mathcal{L}_{\text{align}}(\bm{y}; \theta) = 0$ and $C(\bm{y})$ takes on the value appropriate for samples from the target distribution.

\textbf{2. Monotonicity Property:} Theorem~\ref{thm:elbo_monotonic} shows that sufficiently large reductions in $\mathcal{L}_{\text{align}}(\bm{y}; \theta)$ guarantee improvements in the ELBO lower bound, even accounting for changes in $C(\bm{y})$.

\textbf{3. Computational Tractability:} While computing $C(\bm{y})$ requires evaluating Jacobian traces, minimizing only $\mathcal{L}_{\text{align}}(\bm{y}; \theta)$ provides a computationally efficient proxy that, by Theorem~\ref{thm:elbo_monotonic}, leads to ELBO improvements under reasonable conditions.

\textbf{4. Robustness:} Proposition~\ref{prop:regularization_effect} ensures that practical neural network implementations avoid the theoretical singularities, making the method stable in practice.

This analysis demonstrates that minimizing $\mathcal{L}_{\text{align}}(\bm{y}; \theta)$ is not merely heuristic but has solid theoretical foundation as a strategy for maximizing the variational lower bound on $\log p_1^{\bm{v}_\theta}(\bm{y})$.

\subsection{The Significance of Assumptions}
\label{app:significance_assumptions}

The derivation of Proposition~\ref{prop:propone} and its interpretation rely on several assumptions, as listed in Sec.~\ref{app:proof_prop_1}. In this section, we discuss the significance of each assumption.

\textbf{Assumption~\ref{ass:v_props_app} (Properties of the Velocity Field):}
Lipschitz continuity of $\bm{v}_\theta$ in its spatial argument ensures that the ODE $\dot{\bm{z}}_t = \bm{v}_\theta(\bm{z}_t, t)$ has unique solutions, fundamental for defining $p_1^{\bm{v}_\theta}(\bm{y})$. Differentiability is required for the Jacobian $\nabla_{\bm{z}} \bm{v}_\theta$ to exist, and thus for the divergence term $\mathrm{Tr}(\nabla_{\bm{z}} \bm{v}_\theta)$ in the ELBO to be well-defined. These are standard regularity conditions for flow-based models. Without them, the ELBO formulation would be ill-defined.

\textbf{Assumption~\ref{ass:path_choice_app} (Variational Path Choice):}
The choice of straight-line paths $\bm{z}_s(\bm{x}_0, \bm{y}) = (1-s)\bm{x}_0 + s\bm{y}$ is a specific variational decision. This leads to the path velocity $\dot{\bm{z}}_s = \bm{y} - \bm{x}_0$, which is key to the definition of $\mathcal{L}_{\text{align}}(\bm{y}; \theta)$. This assumption is thus crucial for the specific form of $\mathcal{L}_{\text{align}}$ used.

\textbf{Assumption~\ref{ass:q_choice_app} (Variational Distribution Choice):}
Setting $q(\bm{x}_0 | \bm{y}) = p_{\mathrm{init}}(\bm{x}_0)$ greatly simplifies the ELBO by causing the term $\log p_{\mathrm{init}}(\bm{x}_0) - \log q(\bm{x}_0|\bm{y})$ to vanish. This common choice implies the ELBO considers paths originating from the prior, without inferring a specific $\bm{x}_0$ for each $\bm{y}$. While simplifying, this choice affects the ELBO's tightness.

\textbf{Assumption~\ref{ass:lambda_choice_app} (Weighting Factor in ELBO):}
Choosing $\lambda_s = \lambda$ makes the loss term in the ELBO directly correspond to $\mathcal{L}_{\text{align}}$. A time-dependent $\lambda_s > 0$ is also valid and could yield a tighter bound or differentially weight errors across time $s$. The constant $\lambda$ ensures a direct link to the standard L2 norm in $\mathcal{L}_{\text{align}}$. This choice affects the ELBO's value but not its validity as a lower bound.

\textbf{Assumption~\ref{ass:vtheta_fixed} (Optimality of $\bm{v}_\theta$):}
As detailed in Remark~\ref{remark:optimal} of Sec.~\ref{app:proof_prop_1}, this assumption is not necessary for the mathematical derivation of Proposition 1 itself; the ELBO inequality holds for any $\bm{v}_\theta$ satisfying Assumption~\ref{ass:v_props_app}. However, Assumption~\ref{ass:vtheta_fixed} is paramount for the \emph{interpretation} and \emph{effectiveness} of minimizing $\mathcal{L}_{\text{align}}(\bm{y}; \theta)$ as a strategy to align $\bm{y}$ with $p_{\mathrm{data}}$. If $\bm{v}_\theta$ is optimal as defined, then $\mathcal{L}_{\text{align}}(\bm{y}; \theta)$ will be minimized ideally to zero when $\bm{y}$ is drawn from $p_{\mathrm{data}}$. Consequently, minimizing this loss for $\bm{y}$ encourages $\bm{y}$ to conform to $p_{\mathrm{data}}$.

In essence, Assumptions~\ref{ass:v_props_app} through \ref{ass:lambda_choice_app} are primarily structural, defining the specific ELBO being analyzed. They ensure the bound is well-defined and takes the presented form. Assumption~\ref{ass:vtheta_fixed} concerning the optimality of $\bm{v}_\theta$ is interpretative, providing the rationale for why minimizing a component of this ELBO ($\mathcal{L}_{\text{align}}$) is a meaningful objective for achieving distributional alignment. The overall conclusion that minimizing $\mathcal{L}_{\text{align}}$ serves as a proxy for maximizing a log-likelihood lower bound relies on these assumptions.

\section{Additional Toy Examples}
\label{app:toy}

To further demonstrate the effectiveness of our proposed method, we present additional toy examples with diverse target distributions $p_{\text{data}}$: a Grid of Gaussians, Two Moons, Concentric Rings, a Spiral, and a Swiss Roll.
For each of these distributions, following the visualization style of Fig.~\ref{fig:toy_example}, we illustrate:
(a) The optimized variables $\bm{y}$ (red triangles) and samples from $p_{\text{data}}$ (blue dots), overlaid on the negative log-likelihood (NLL) landscape of $p_{\text{data}}$ (background heatmap showing $-\log p_{\text{data}}(\cdot)$).
(b) The landscape of the alignment loss $\mathcal{L}_{\text{align}}$ (background heatmap), with samples from $p_{\text{data}}$ (blue dots).
(c) The evolution of $\mathcal{L}_{\text{align}}(\bm{y}; \theta)$ (blue solid line) and the true NLL $-\log p_{\text{data}}(\bm{y})$ (red dashed line) during the optimization of $\bm{y}$.

For the Grid of Gaussians, which is also a mixture of Gaussians, the NLL $-\log p_{\text{data}}(\bm{y})$, is computed analytically. For the other distributions (Two Moons, Concentric Rings, Spiral, and Swiss Roll), where an analytical form for $p_{\text{data}}$ is not readily available, we estimate the NLL using Kernel Density Estimation (KDE). This estimation is based on $N=100,000$ samples drawn from the respective $p_{\text{data}}$ and employs a Gaussian kernel with a bandwidth of $h=0.1$.
The probability density $\hat{p}_{\text{data}}(\mathbf{x})$ at a point $\mathbf{x}$ is estimated as:
\begin{equation}
    \hat{p}_{\text{data}}(\mathbf{x}) = \frac{1}{N h^d} \sum_{i=1}^{N} K\left(\frac{\mathbf{x} - \mathbf{x}_i}{h}\right),
    \label{eq:kde_appendix}
\end{equation}
where $\mathbf{x}_i$ are the $N$ samples drawn from $p_{\text{data}}$, $d$ is the dimensionality (here, $d=2$), and $K(\cdot)$ is the Gaussian kernel function. The NLL for an optimized variable $\bm{y}$ is then approximated by $-\log\left(\hat{p}_{\text{data}}(\bm{y})\right)$. This provides an empirical measure of how well $\bm{y}$ aligns with the target distribution as estimated by KDE.

The results for these additional toy examples are comprehensively presented in Fig.~\ref{fig:more_toy_examples}. Each row in this figure corresponds to one of the five target distributions. The left column (a,d,g,j,m) shows that the optimized variables $\bm{y}$ (red triangles) successfully converge to the high-density (low-NLL) regions of $p_{\text{data}}$. The middle column (b,e,h,k,n) demonstrates that the landscape of our alignment loss $\mathcal{L}_{\text{align}}$ closely mirrors the NLL surface of $p_{\text{data}}$, with true data samples (blue dots) residing in low-loss areas. The right column (c,f,i,l,o) confirms the strong positive correlation between $\mathcal{L}_{\text{align}}$ and the NLL of $\bm{y}$, as both decrease concomitantly during optimization. Furthermore, Fig.~\ref{fig:toy_examples_training_progress} visualizes the optimization trajectory of $\bm{y}$ for the initial mixture of Gaussians (from Sec.~\ref{sec:toy}) alongside the five additional toy distributions. These sequential snapshots illustrate how minimizing $\mathcal{L}_{\text{align}}$ effectively steers the variables $\bm{y}$ from their initialization towards the intricate structures of the target distributions, reinforcing the robustness and efficacy of our alignment loss.

\begin{figure*}[p]
    \centering
    \begin{subfigure}{\textwidth}
        \centering
        \makebox[0pt][r]{\raisebox{-5em}{\rotatebox{90}{\parbox{10em}{\centering Grid of Gaussians}}}}
        \hspace{0mm}
        \begin{minipage}{0.3\textwidth}
            \centering
            {\small (a)}
            \includegraphics[width=\linewidth]{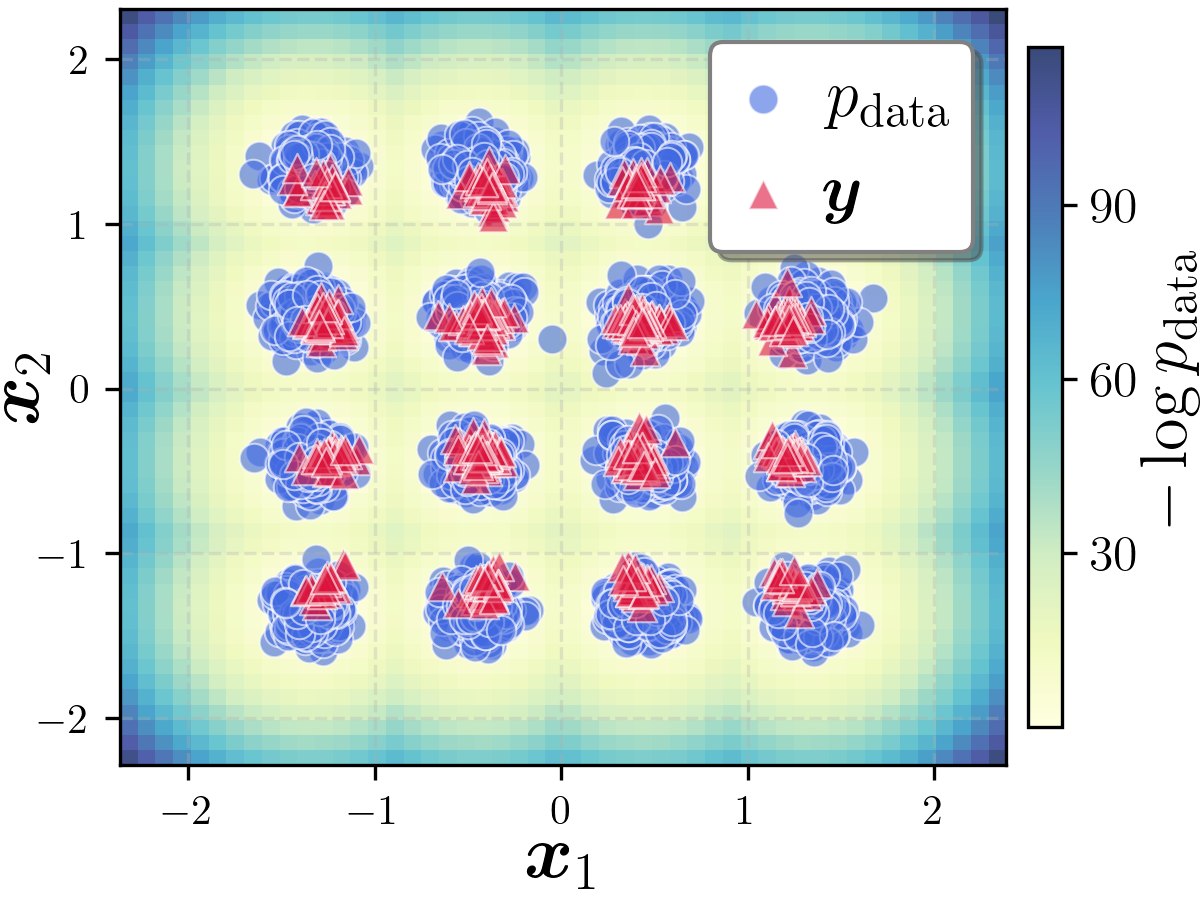}
        \end{minipage}
        \hfill
        \begin{minipage}{0.3\textwidth}
            \centering
            {\small (b)}
            \includegraphics[width=\linewidth]{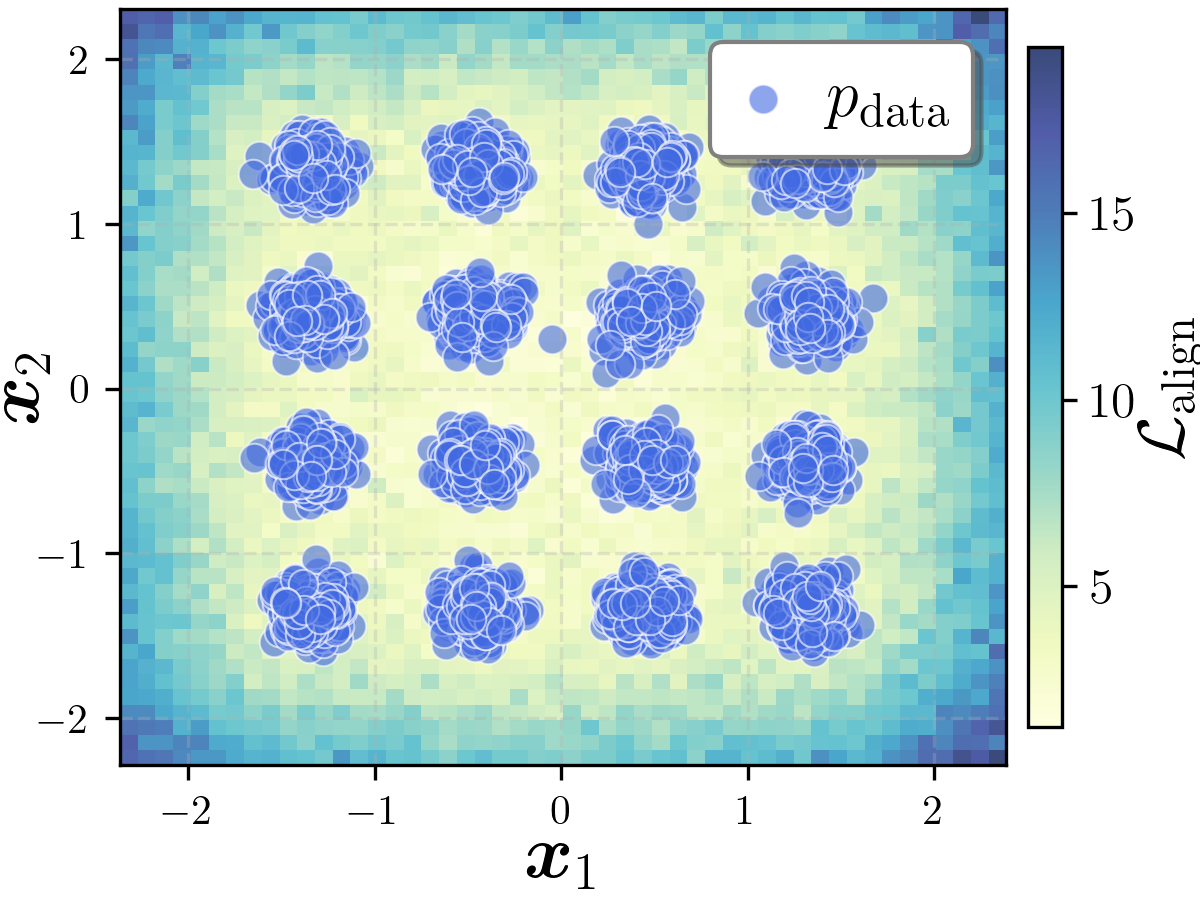}
        \end{minipage}
        \hfill
        \begin{minipage}{0.3\textwidth}
            \centering
            {\small (c)}
            \includegraphics[width=\linewidth]{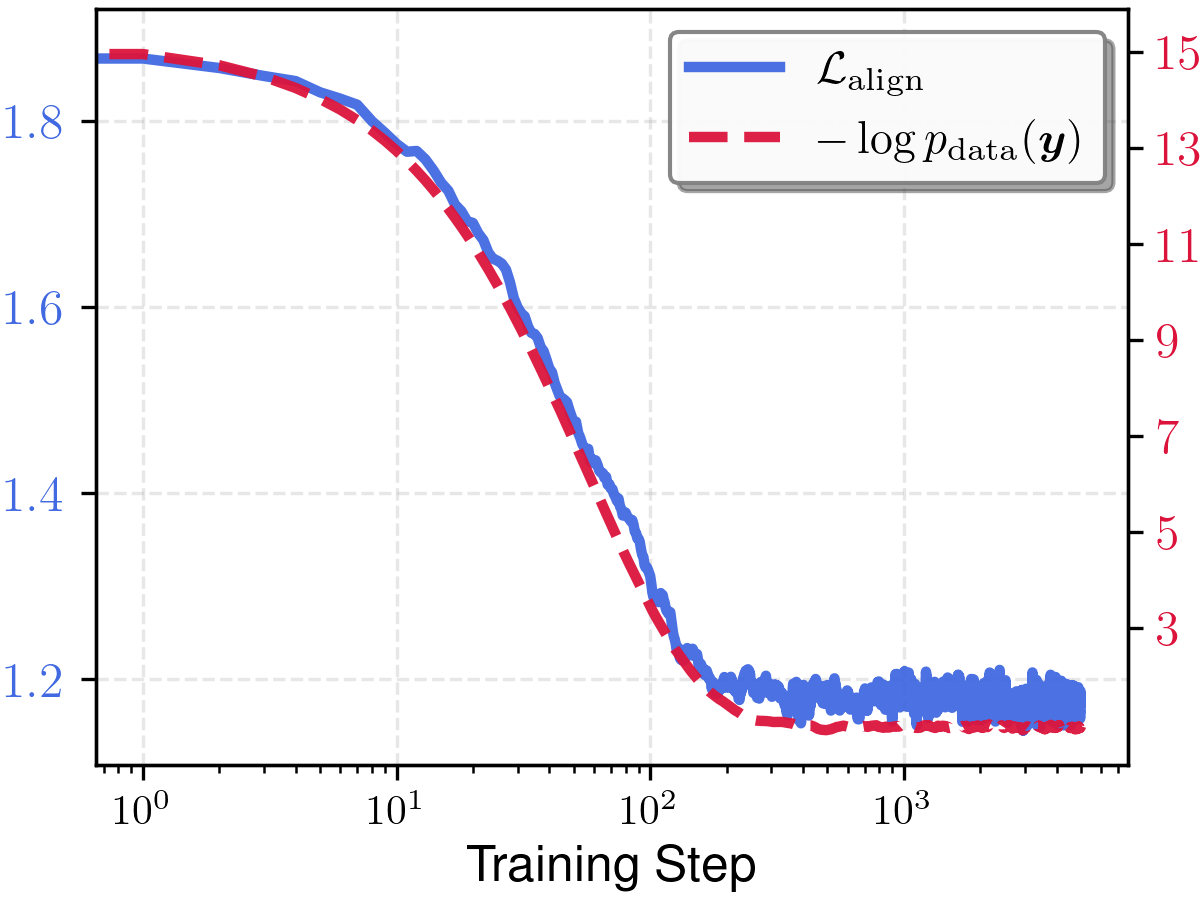}
        \end{minipage}
    \end{subfigure}
    
    \begin{subfigure}{\textwidth}
        \centering
        \makebox[0pt][r]{\raisebox{-5em}{\rotatebox{90}{\parbox{10em}{\centering Two Moons}}}}
        \hspace{0mm}
        \begin{minipage}{0.3\textwidth}
            \centering
            {\small (d)}
            \includegraphics[width=\linewidth]{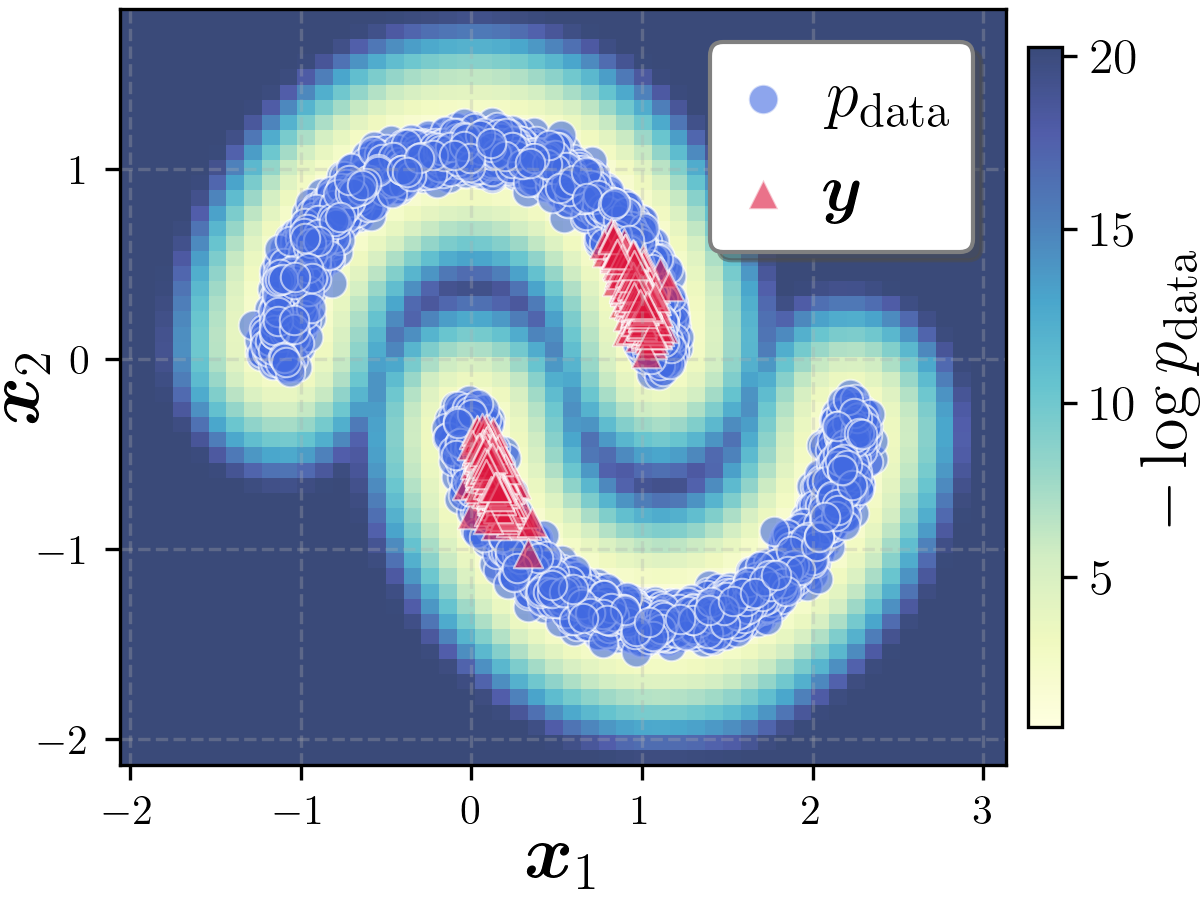}
        \end{minipage}
        \hfill
        \begin{minipage}{0.3\textwidth}
            \centering
            {\small (e)}
            \includegraphics[width=\linewidth]{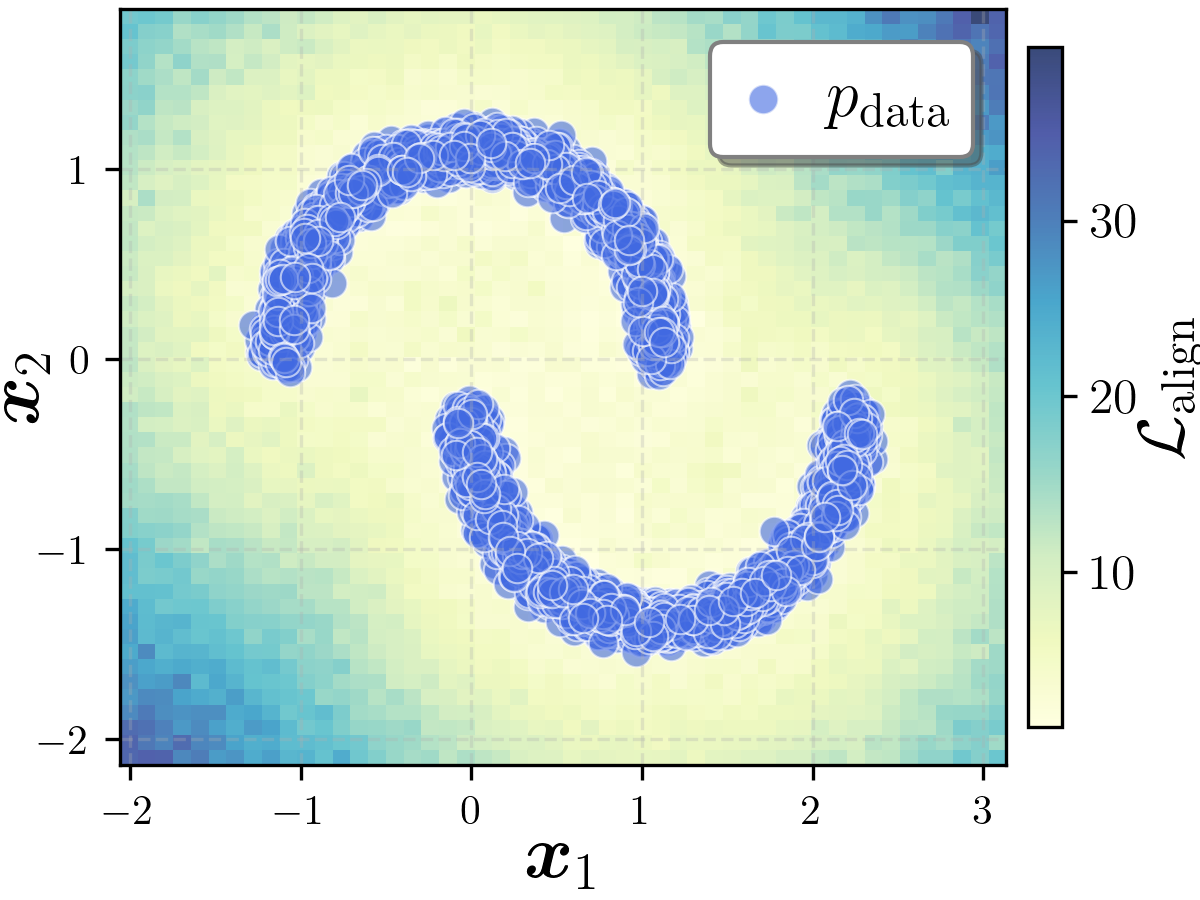}
        \end{minipage}
        \hfill
        \begin{minipage}{0.3\textwidth}
            \centering
            {\small (f)}
            \includegraphics[width=\linewidth]{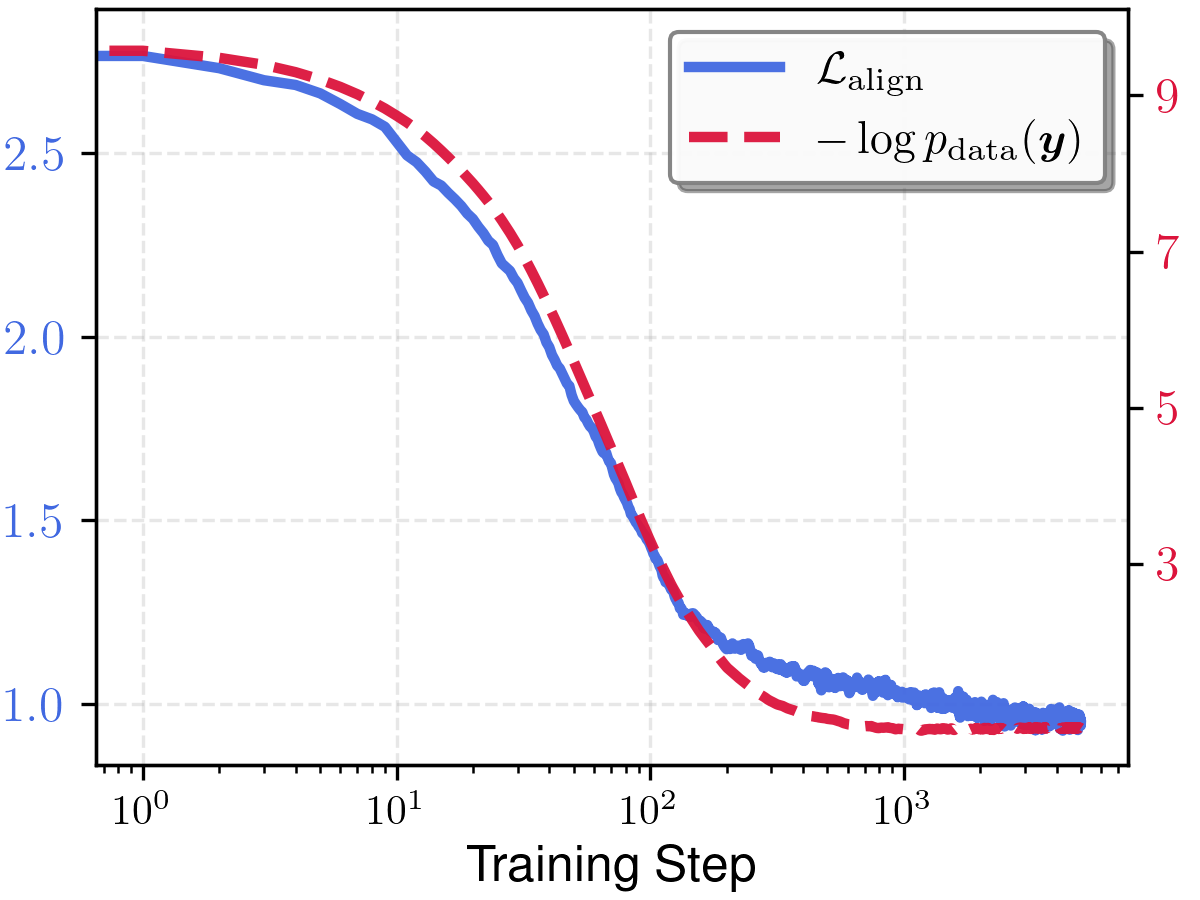}
        \end{minipage}
    \end{subfigure}
    
    \begin{subfigure}{\textwidth}
        \centering
        \makebox[0pt][r]{\raisebox{-5em}{\rotatebox{90}{\parbox{10em}{\centering Concentric Rings}}}}
        \hspace{0mm}
        \begin{minipage}{0.3\textwidth}
            \centering
            {\small (g)}
            \includegraphics[width=\linewidth]{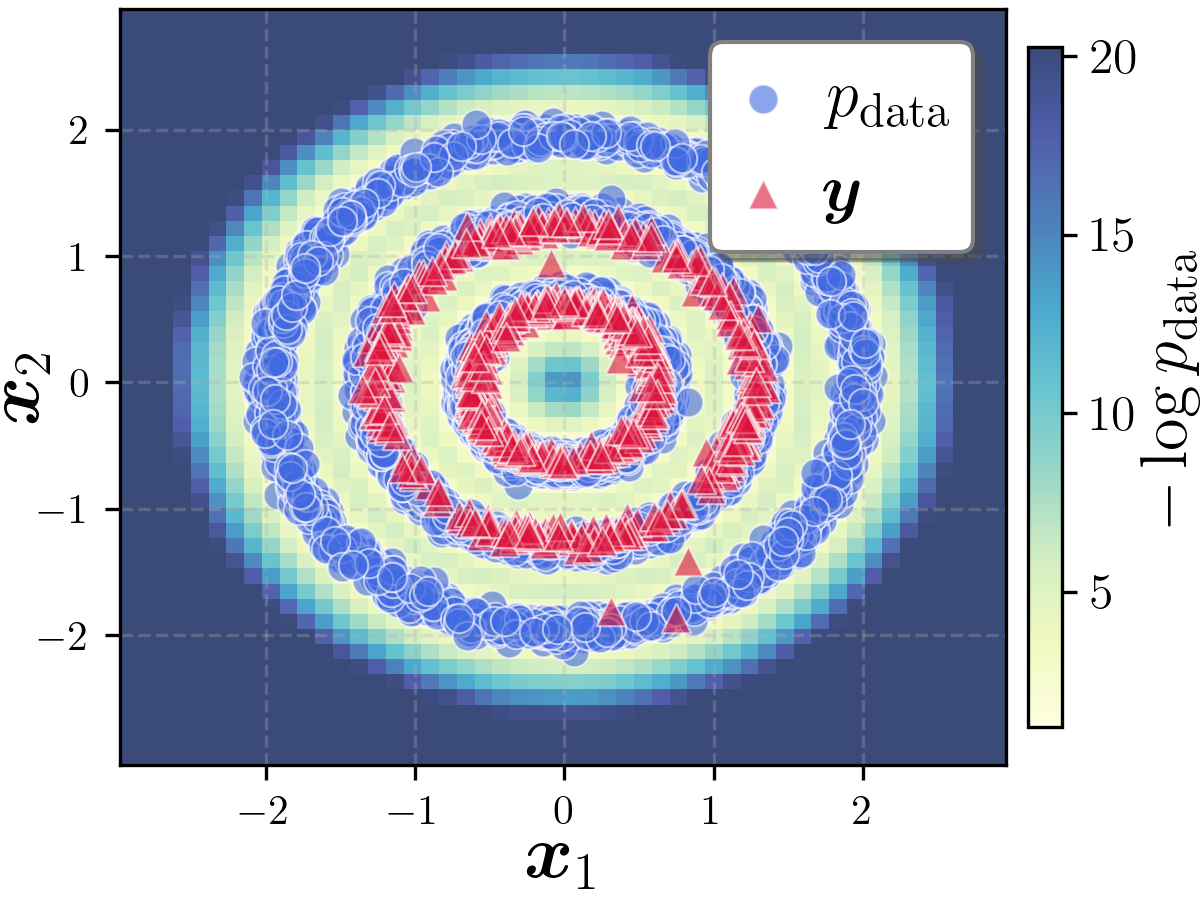}
        \end{minipage}
        \hfill
        \begin{minipage}{0.3\textwidth}
            \centering
            {\small (h)}
            \includegraphics[width=\linewidth]{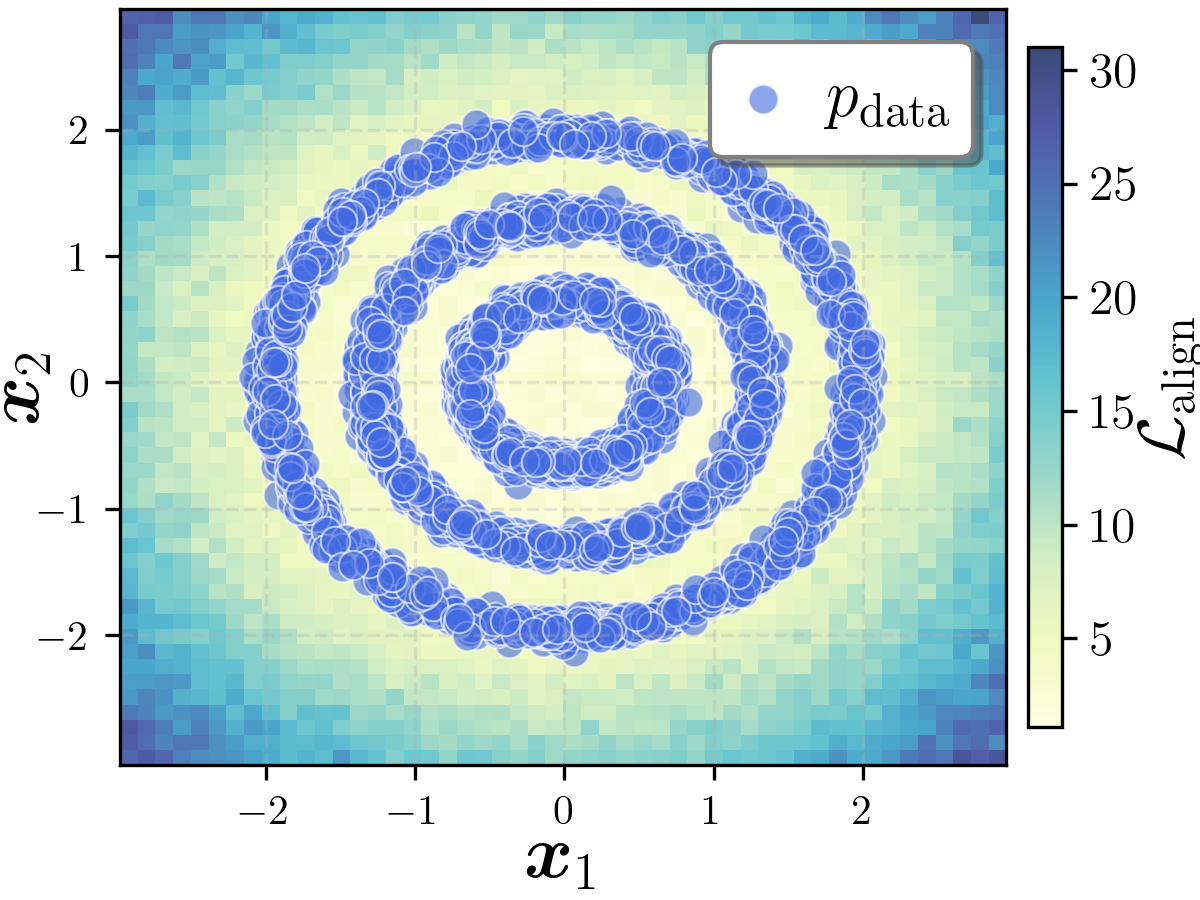}
        \end{minipage}
        \hfill
        \begin{minipage}{0.3\textwidth}
            \centering
            {\small (i)}
            \includegraphics[width=\linewidth]{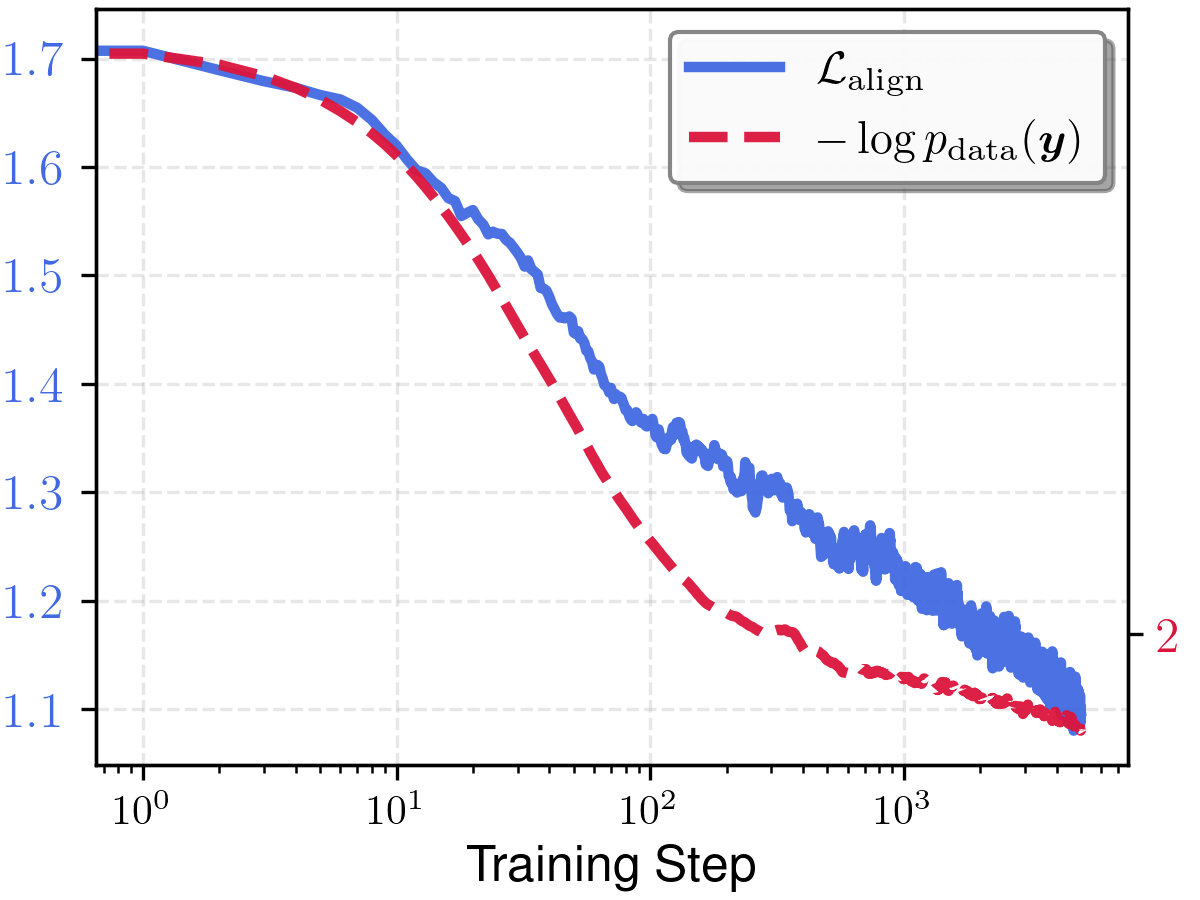}
        \end{minipage}
    \end{subfigure}
    
    \begin{subfigure}{\textwidth}
        \centering
        \makebox[0pt][r]{\raisebox{-5em}{\rotatebox{90}{\parbox{10em}{\centering Spiral}}}}
        \hspace{0mm}
        \begin{minipage}{0.3\textwidth}
            \centering
            {\small (j)}
            \includegraphics[width=\linewidth]{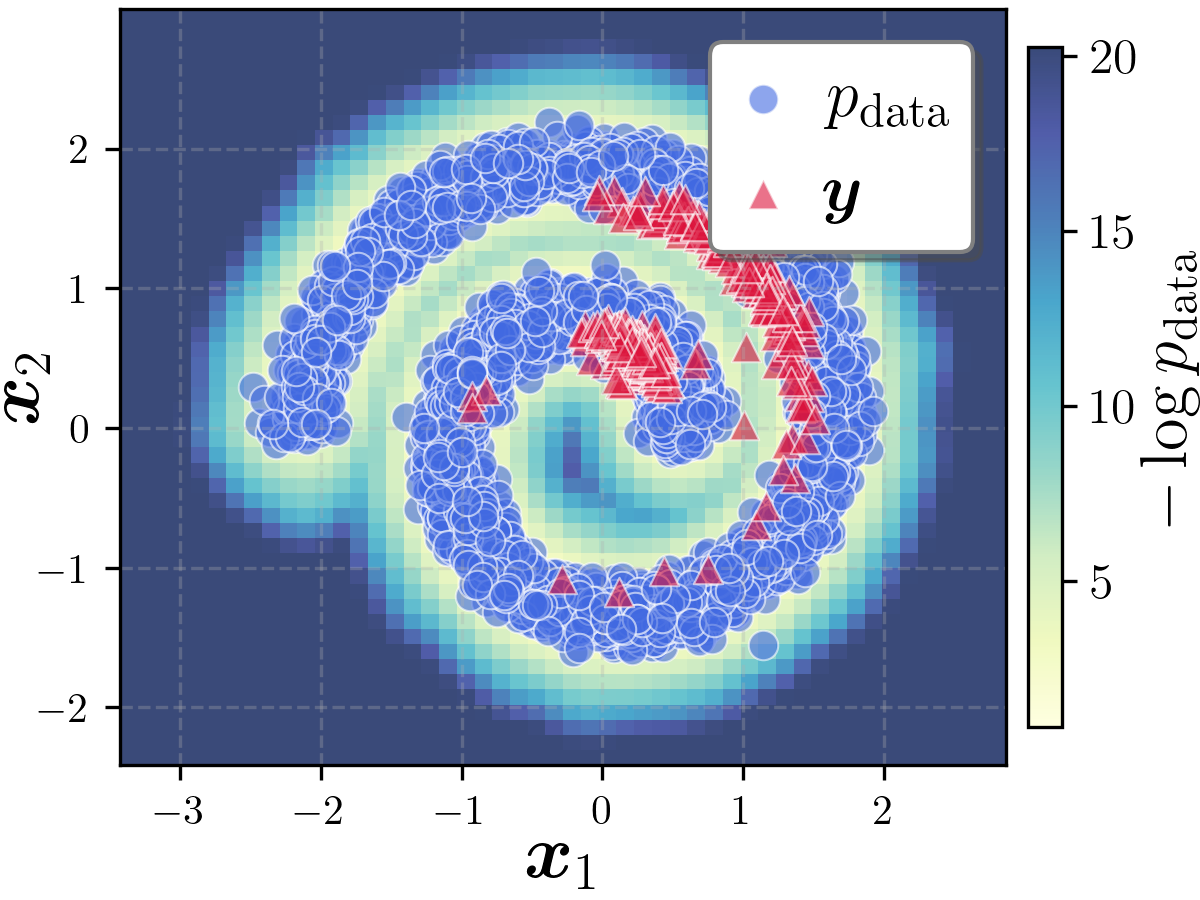}
        \end{minipage}
        \hfill
        \begin{minipage}{0.3\textwidth}
            \centering
            {\small (k)}
            \includegraphics[width=\linewidth]{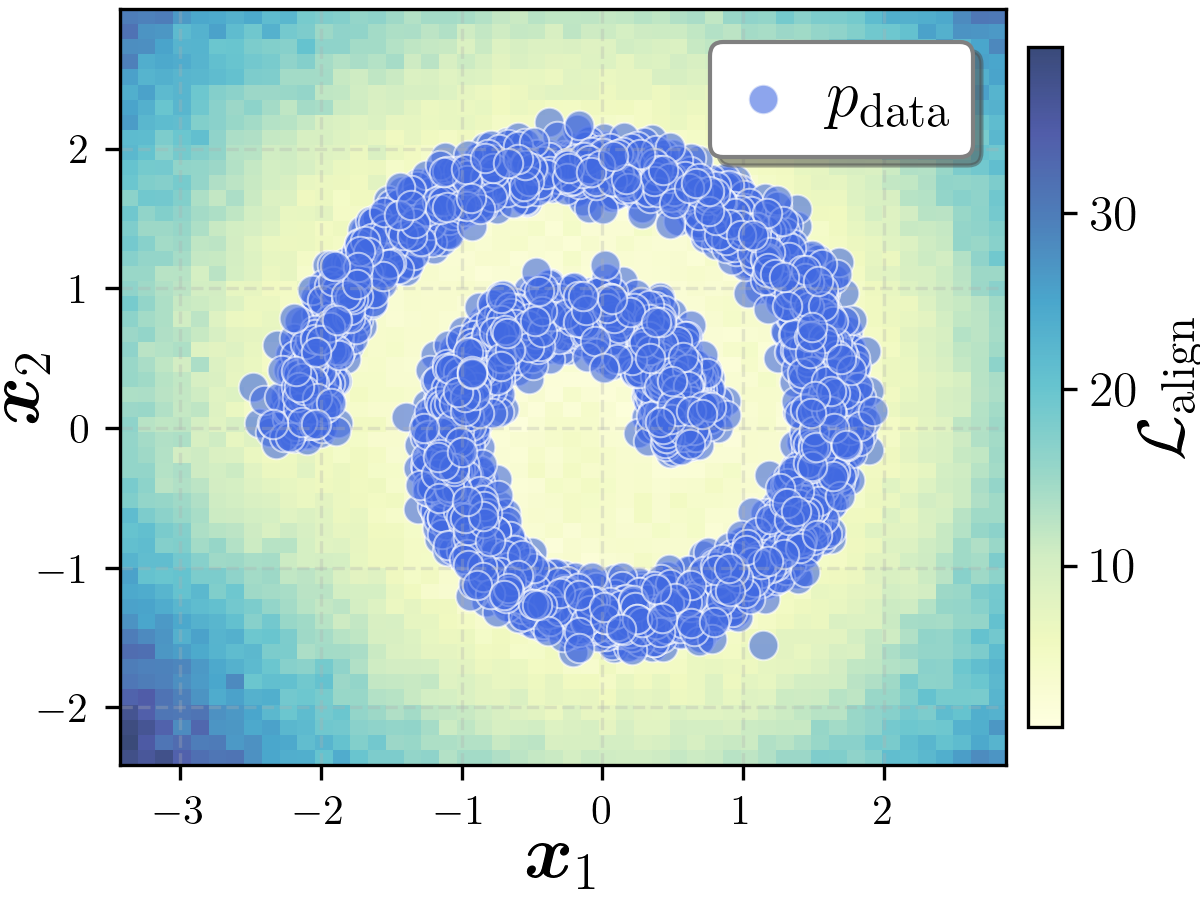}
        \end{minipage}
        \hfill
        \begin{minipage}{0.3\textwidth}
            \centering
            {\small (l)}
            \includegraphics[width=\linewidth]{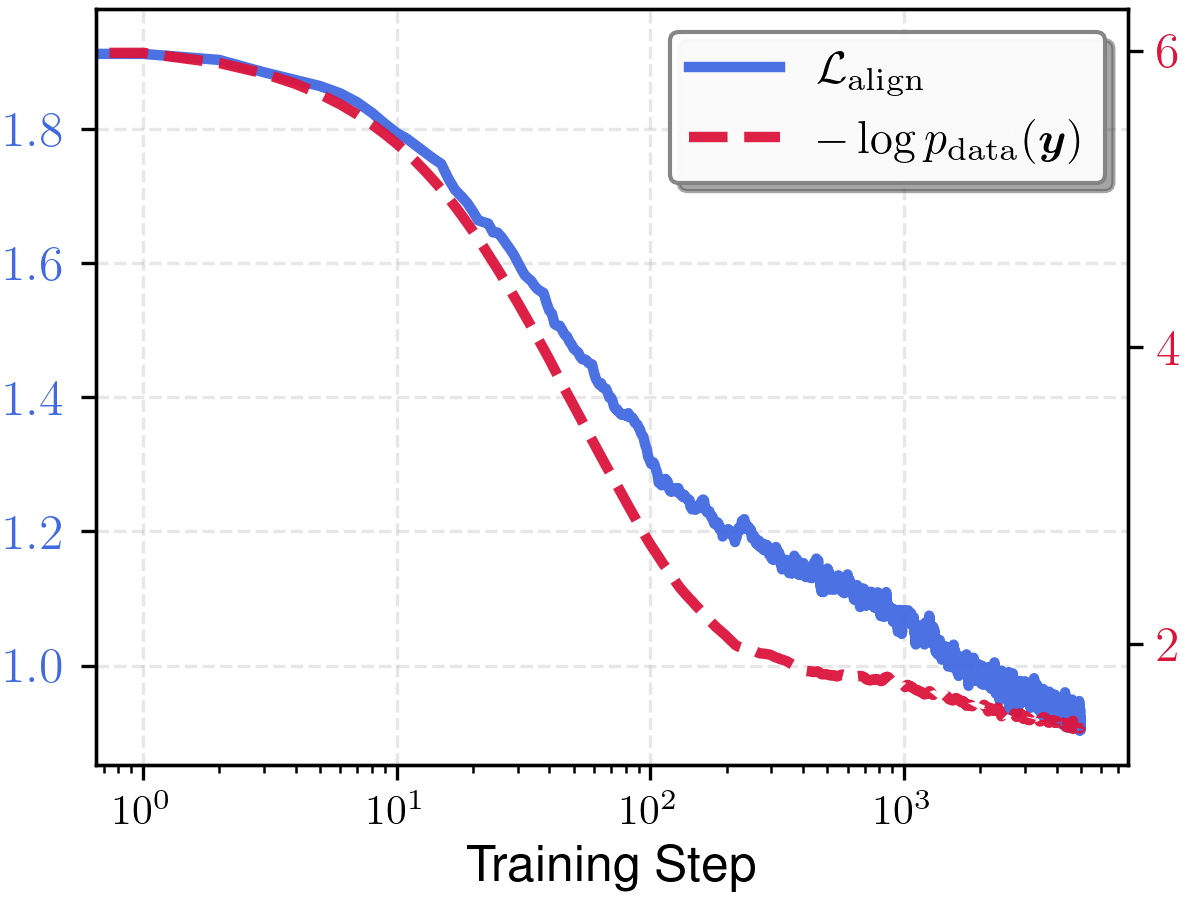}
        \end{minipage}
    \end{subfigure}
    
    \begin{subfigure}{\textwidth}
        \centering
        \makebox[0pt][r]{\raisebox{-5em}{\rotatebox{90}{\parbox{10em}{\centering Swiss Roll}}}}
        \hspace{0mm}
        \begin{minipage}{0.3\textwidth}
            \centering
            {\small (m)}
            \includegraphics[width=\linewidth]{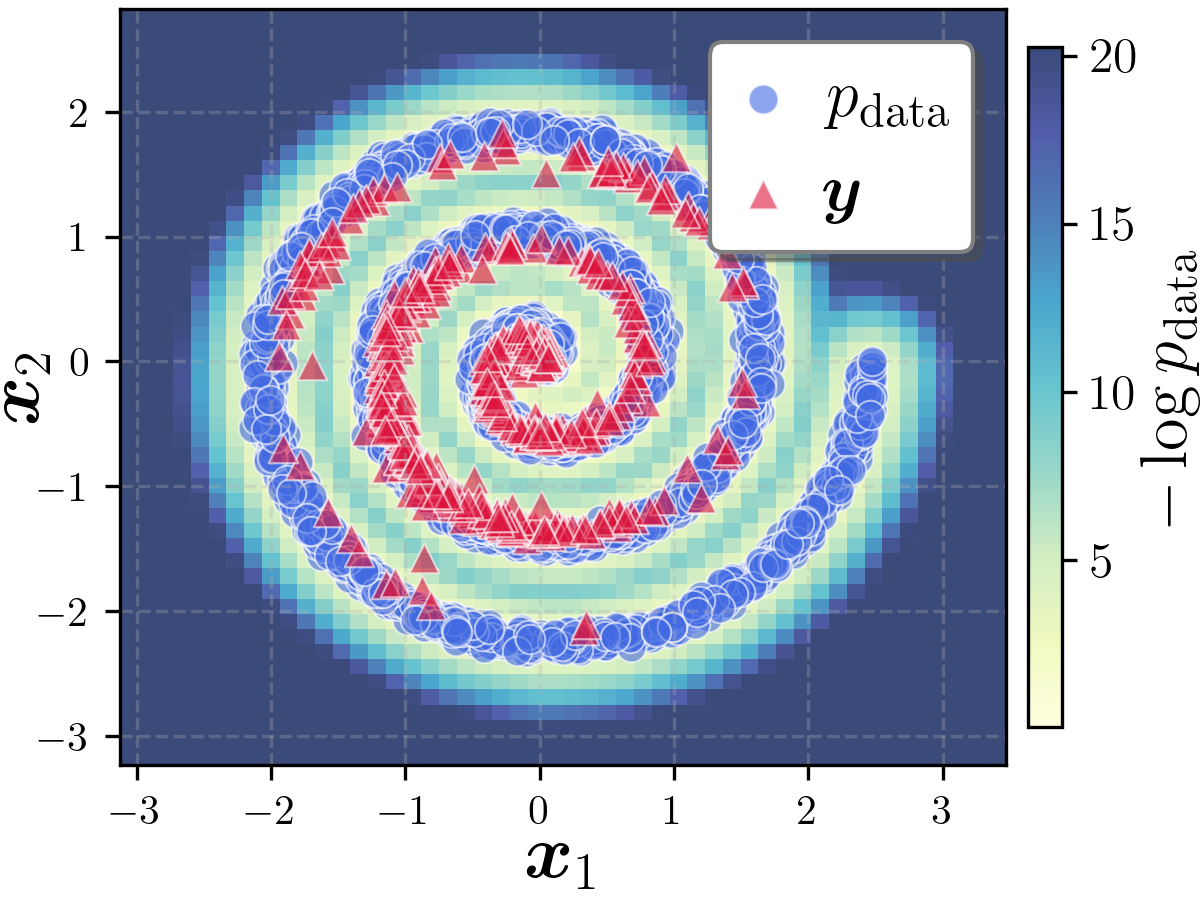}
        \end{minipage}
        \hfill
        \begin{minipage}{0.3\textwidth}
            \centering
            {\small (n)}
            \includegraphics[width=\linewidth]{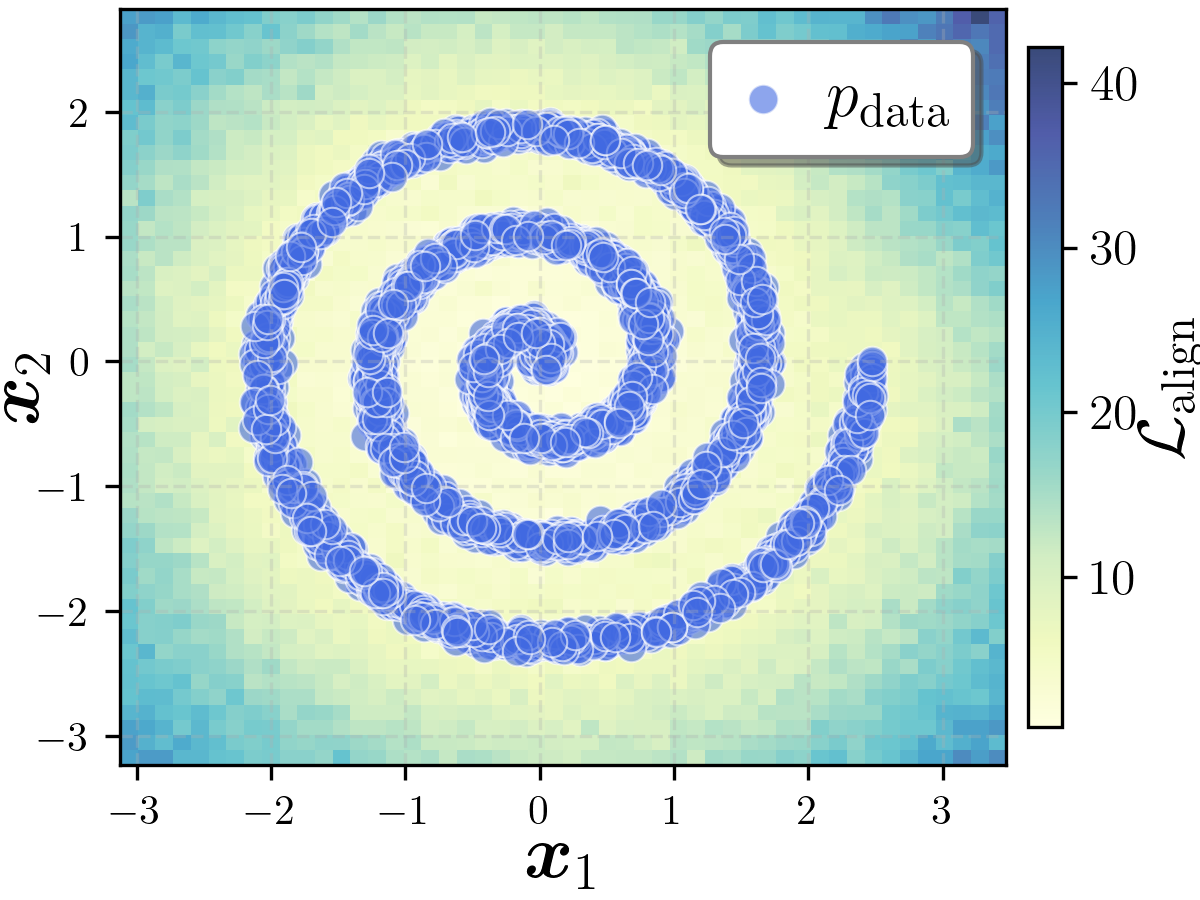}
        \end{minipage}
        \hfill
        \begin{minipage}{0.3\textwidth}
            \centering
            {\small (o)}
            \includegraphics[width=\linewidth]{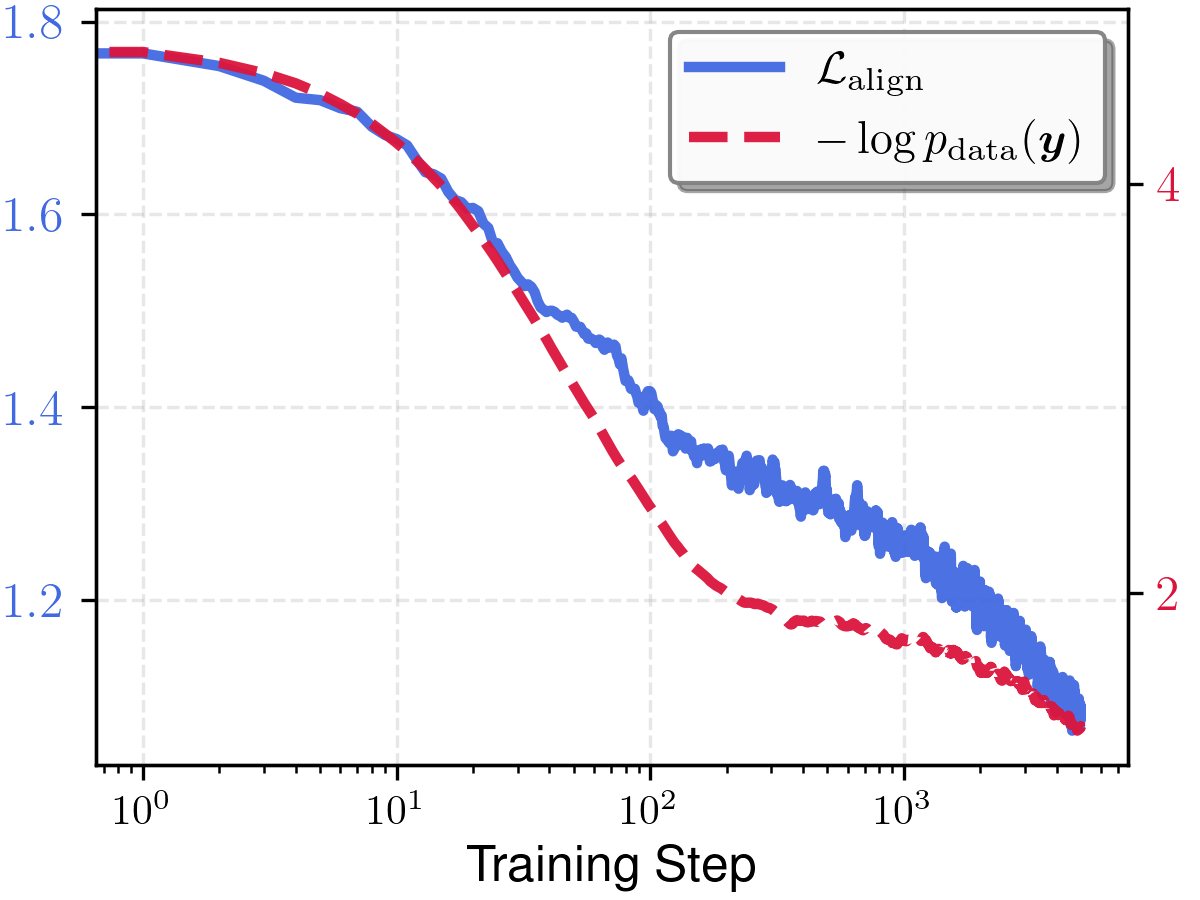}
        \end{minipage}
    \end{subfigure}
    
    \caption{Further illustrations of our method's performance on various 2D toy examples. Each row corresponds to a different target distribution $p_{\text{data}}$ (Grid of Gaussians, Two Moons, Concentric Rings, Spiral, and Swiss Roll).
    \textbf{Left column (a,d,g,j,m):} Optimized variables $\bm{y}$ (red triangles) and samples from $p_{\text{data}}$ (blue dots). The background heatmap visualizes the negative log-likelihood (NLL) $-\log p_{\text{data}}(\cdot)$, with $\bm{y}$ converging to low-NLL (high-density) regions.
    \textbf{Middle column (b,e,h,k,n):} The landscape of the alignment loss $\mathcal{L}_{\text{align}}$ (heatmap) with $p_{\text{data}}$ samples (blue dots). This landscape mirrors the NLL surface, and $p_{\text{data}}$ samples are concentrated in areas of low $\mathcal{L}_{\text{align}}$.
    \textbf{Right column (c,f,i,l,o):} Training curves for $\mathcal{L}_{\text{align}}(\bm{y}; \theta)$ (blue solid line) and NLL $-\log p_{\text{data}}(\bm{y})$ (red dashed line). Their strong positive correlation and concurrent decrease during optimization demonstrate that $\mathcal{L}_{\text{align}}$ effectively serves as a proxy for maximizing the log-likelihood of $\bm{y}$ under $p_{\text{data}}$.}
    \label{fig:more_toy_examples} 
\end{figure*}

\begin{figure}[htbp]
    \centering
    \begin{tikzpicture}[scale=0.9]
        \node[inner sep=0pt] (images) {%
            \begin{tabular}{@{}c@{\hspace{1pt}}c@{\hspace{1pt}}c@{\hspace{1pt}}c@{\hspace{1pt}}c@{\hspace{1pt}}c@{}}
                \includegraphics[width=0.154\textwidth]{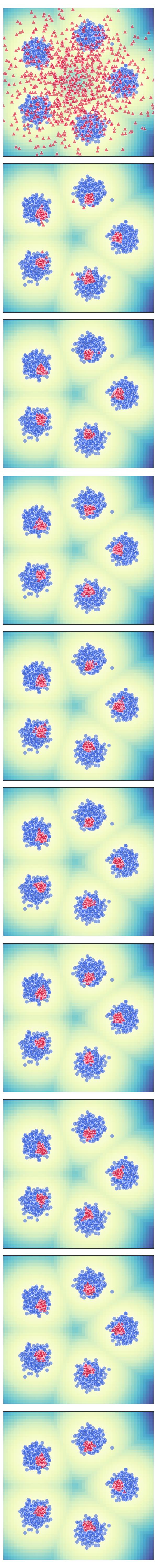} &
                \includegraphics[width=0.154\textwidth]{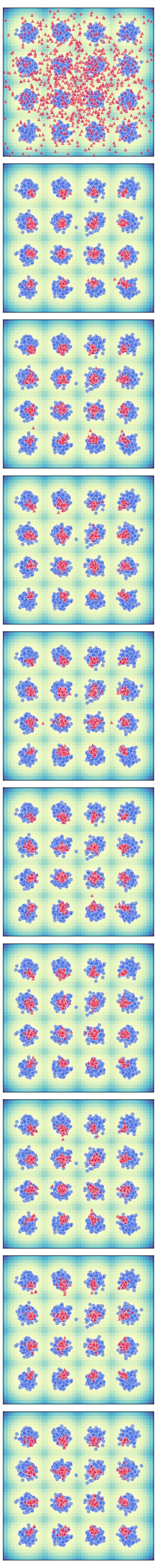} &
                \includegraphics[width=0.154\textwidth]{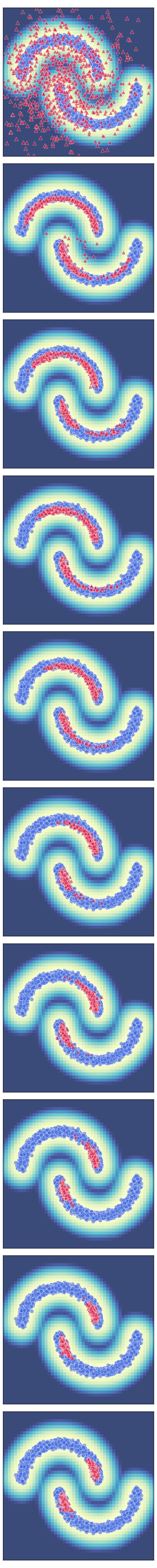} &
                \includegraphics[width=0.154\textwidth]{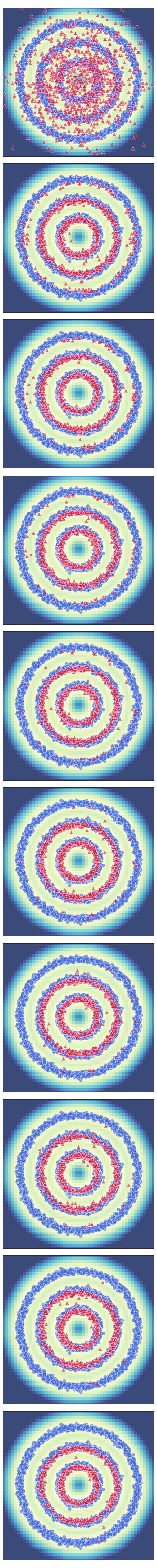} &
                \includegraphics[width=0.154\textwidth]{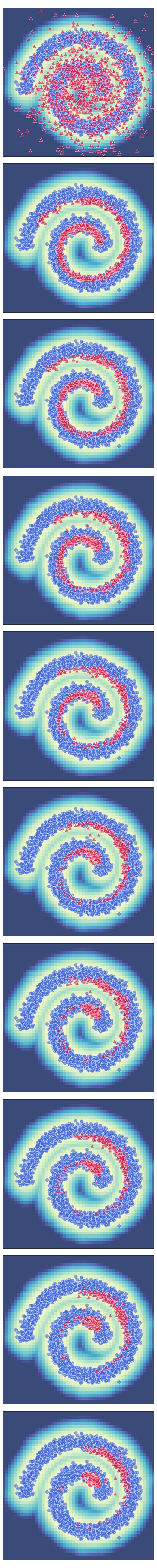} &
                \includegraphics[width=0.154\textwidth]{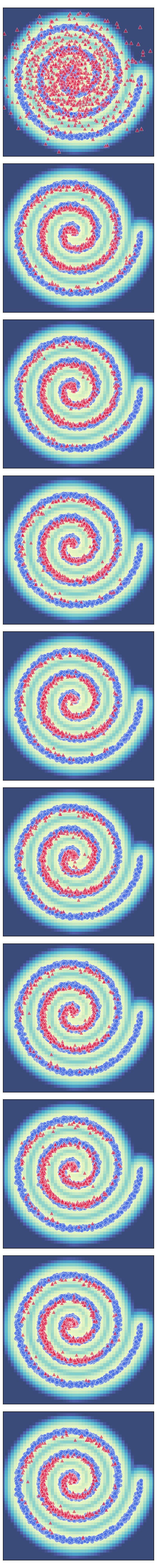}
            \end{tabular}%
        };
        \draw[-stealth, thick] ([xshift=-1em,yshift=-0.5em]images.north west) -- ([xshift=-1em,yshift=0.5em]images.south west) 
            node[midway, left=2mm, rotate=90] {Training Progress};
    \end{tikzpicture}
    \vspace{-0.2cm}
    \caption{Evolution of the optimized variables $\bm{y}$ (red triangles) during training across various toy examples. Each column represents a target distribution $p_{\text{data}}$. The training progress demonstrates how minimizing $\mathcal{L}_{\text{align}}$ guides $\bm{y}$ to converge towards low-NLL (high-density) regions of $p_{\text{data}}$.}
    \label{fig:toy_examples_training_progress} 
\end{figure}

\section{Implementation Details}
\label{app:implementation}

\subsection{Implementation Details of the Toy Example}

The primary toy example, illustrated in Figure~\ref{fig:toy_example}, utilizes a 2D Mixture of Gaussians (MoG) as the target data distribution \( p_{\text{data}}(\bm{x}) \). This MoG distribution consists of 5 components, each with an isotropic standard deviation of 0.3. The means of these Gaussian components are distributed evenly on a circle of radius 3.0. Prior to model training, samples drawn from this MoG distribution are normalized by dividing by their standard deviation, which is empirically computed from a large batch of 10 million samples. In addition to the MoG, our toy experiments also encompassed other 2D synthetic distributions, including Spiral, Moons, Concentric Rings, Swiss Roll, and Grid of Gaussians, to demonstrate the versatility of our approach. The general setup for the flow model and learnable latents applies across these various distributions.

The conditional flow model, denoted \( v_\phi(\bm{x}, t) \), is implemented using a MLP with AdaLN. This network has 2 input channels, 2 output channels, a hidden dimensionality of 512, and incorporates 4 residual blocks. The flow model is trained for 100,000 steps using the Adam optimizer (beta values of (0.9, 0.999) and no weight decay) with a constant learning rate of \(1 \times 10^{-4}\), and a batch size of 256.

A set of 1,000 learnable latent variables \(\{\bm{y}_i\}\) are initialized by sampling from a standard normal distribution \( \mathcal{N}(\bm{0}, \bm{I}) \). These latents are then optimized to align with the target distribution \( p_{\text{data}} \) by minimizing the alignment loss \( \mathcal{L}_{\text{align}} \). This alignment training phase also employs the Adam optimizer (betas=(0.9, 0.999), no weight decay), with a learning rate of \(1 \times 10^{-2}\), and runs for 5,000 steps.

\begin{table}[htbp]
    \centering
    \caption{Training Hyperparameters}
    \label{tab:hyperparams_comparison}
    \begin{tabular}{l>{\centering\arraybackslash}p{2cm}>{\centering\arraybackslash}p{2cm}>{\centering\arraybackslash}p{2cm}}
    \toprule
    \textbf{Hyperparameter} & \textbf{Flow} & \textbf{Autoencoder} & \textbf{MAR} \\
    \midrule
    Global Batch Size & \multicolumn{3}{c}{256} \\
    Steps & $1000k$ & $50k$ & $250k$ \\
    Optimizer & \multicolumn{3}{c}{AdamW} \\
    Base Learning Rate & \multicolumn{3}{c}{\num{1.0e-4}} \\
    LR Scheduler & Cosine & Cosine & Constant \\
    Warmup Steps & 2.5k & 2.5k & 62.5k \\
    Adam $\beta_1$ & \multicolumn{3}{c}{0.9} \\
    Adam $\beta_2$ & 0.95 & 0.95 & 0.999 \\
    Weight Decay & \num{1.0e-4} & \num{1.0e-4} & 0.02 \\
    Max Grad Norm & \multicolumn{3}{c}{1.0} \\
    Mixed Precision & \multicolumn{3}{c}{BF16} \\
    EMA Rate & \multicolumn{3}{c}{0.9999} \\
    \bottomrule
    \end{tabular}
\end{table}

\subsection{Implementation Details of the Flow Model}

The flow model \( \bm{v}_\theta(\bm{z}, t): \mathbb{R}^{d_1} \times [0,1] \rightarrow \mathbb{R}^{d_1} \) is implemented as a multi-layer perceptron (MLP) with 6 layers and 1024 hidden units per layer. The network employs GELU activation functions and incorporates time modulation through adaptive layer normalization (AdaLN) to handle the temporal dimension $t$. When dimension mismatch occurs between the latent space dimension $d_1$ and target feature space dimension $d_2$, fixed linear projection layers are applied to map target features to the appropriate dimension. These projection matrices are initialized with Gaussian weights scaled by $1/\sqrt{d_2}$ and remain frozen during training.

The flow model is trained using the flow matching objective on the target distribution $p_{\text{data}}$ for 1 million steps. During training, the model learns to predict velocity fields that transport samples from a standard Gaussian base distribution $\mathcal{N}(\bm{0}, \bm{I})$ to the target distribution along straight-line interpolation paths. The training employs mixed precision (BF16) with gradient clipping and exponential moving averages (EMA). Upon completion of training, the flow model parameters $\theta$ are frozen and used for subsequent latent space alignment. Detailed hyperparameters are provided in Table~\ref{tab:hyperparams_comparison}.

\subsection{Implementation Details of Autoencoders}

Our autoencoder architecture follows the SoftVQ design, which employs Vision Transformer (ViT) based encoder and decoder networks. The encoder utilizes a ViT-Large model with patch size 14 from DINOv2 \citep{oquab2023dinov2}, initialized with pre-trained weights and fine-tuned with full parameter updates during training. The decoder employs the same ViT-Large architecture but is initialized randomly without pre-trained weights.

The training process utilizes adversarial loss with a DINOv2-based discriminator, incorporating patch-based adversarial training with hinge loss formulation. Perceptual loss is applied using VGG features with a warmup period of $10k$ steps. The model is trained for $50k$ steps with cosine learning rate scheduling and exponential moving averages for stable training dynamics. Unlike SoftVQ, we do not employ the sample-level alignment loss (i.e., REPA loss), making our method more general and efficient. Detailed hyperparameters are provided in Table~\ref{tab:hyperparams_comparison}.

We followed the SoftVQ implementation as closely as possible. While we can reproduce almost identical reconstruction results, our tokenizer doesn't quite match the generation performance of the released pre-trained model, even after significant effort to optimize it. We believe this gap comes from differences in the cleaned-up code and the specific hardware we used for training. To keep things fair and validate the effectiveness of our method, we conduct all experiments on \textit{the same hardware with identical training settings}.

\subsection{Implementation Details of MAR}

We follow the original MAR-B implementation with several key modifications. We incorporate qk-norm in the attention mechanism and replace the diffusion head with a flow-based head trained using per-token flow matching loss. The original SD-KL-16 autoencoder is replaced with our trained autoencoders, applying input normalization with scaling factor 1.7052 estimated from sample batches.

Our model uses MAR-B architecture with $256 \times 256$ input images. The flow-based MLP head features adaptive layer normalization with 6 layers and 1024 hidden units per layer, identical to the original diffusion implementation. The model processes sequences of length 64 corresponding to our 64-token latent representation. More training details are provided in Table~\ref{tab:hyperparams_comparison}.

For inference, we employ an Euler sampler with 100 steps for the flow-based generation. The autoregressive sampling is limited to 64 steps. Generation uses batch size 256 and produces 50,000 images for evaluation. All evaluations use the standard toolkit from guided diffusion with FID and IS metrics computed at regular intervals during training.

\section{Additional Discussions for the Experiments}
\label{app:exp}

Here we provide additional discussions and analysis for the experiments presented in Section~\ref{sec:generation}.

\paragraph{Does Johnson-Lindenstrauss Lemma Really Hold?}
While the Johnson-Lindenstrauss (JL) lemma theoretically guarantees that random projections preserve distances with high probability, our experimental setup violates its conditions due to the large sample size relative to the target dimension. However, our results demonstrate that random projections can still preserve distributional structure to a sufficient extent for effective alignment. In our ablation study with Tab.~\ref{tab:ablation}a, random projection achieves the best performance with FID of 11.89 and IS of 102.23, significantly outperforming PCA (FID: 14.95, IS: 83.59) and average pooling (FID: 16.06, IS: 60.37). This suggests that the structure-preserving properties of random projections, even when the JL lemma doesn't strictly hold, are more beneficial than the variance-maximizing properties of PCA or the spatial averaging of pooling operations.
\paragraph{Continuous or Discrete?}
Our method demonstrates robustness across both continuous and discrete target distributions. Continuous semantic features from DinoV2 achieve the best generation performance among all variants in Tab.~\ref{tab:imagenet} and the discrete textual features from Qwen also achieve effective performance. In contrast, discrete VQ features perform poorly, likely due to structural limitations imposed by low dimensionality (8-dim). The collapse observed in discrete VQ experiments during training can be attributed to the insufficient capacity of the low-dimensional latent space to capture the complexity of ImageNet data while simultaneously satisfying the alignment constraint.
\paragraph{Why Textual Features Work?}
The surprising effectiveness of textual embeddings (Qwen) for visual generation warrants deeper analysis. Despite being trained on text data, Qwen embeddings achieve competitive generation performance (FID: 11.89 without CFG) and the best PSNR (23.12) among aligned methods. This suggests that high-quality textual representations capture abstract semantic structures that are transferable across modalities. The 896-dimensional Qwen embeddings provide a rich semantic space that can effectively constrain the visual latent space without being overly restrictive. This cross-modal transferability indicates that the structural benefits of alignment are not limited to within-modality features.
\paragraph{Is Generation Loss a Good Indicator?}
The training loss in generation of our aligned autoencoders is significantly lower than other models. However, we observe that lower training losses do not necessarily translate to better generation results, even for flow-based models where loss is proven to be a direct indicator for generation performance. This paradox can be attributed to the simplification of the latent space under strong alignment constraints. While simplified latent spaces are easier for generative models to sample from (hence lower training losses), they may sacrifice the diversity and fine-grained details necessary for high-quality generation. This suggests that generation quality depends not only on the ease of modeling the latent distribution but also on the expressiveness and diversity preserved in the aligned space.
\paragraph{How to Select the Prior?}
The optimal choice of target distribution remains an open research question. Our experiments suggest several guidelines: (1) Higher dimensionality generally enables better performance, as evidenced by the poor performance of 8-dimensional VQ features compared to higher-dimensional alternatives. (2) Semantic richness matters, but not necessarily complexity—simple textual features can match sophisticated visual features. (3) The structural properties of the target distribution (e.g., smoothness, cluster separation) may be more important than its semantic content for generation quality.

\end{document}